\newtheorem{theorem}{Theorem}
\newtheorem{lemma}[theorem]{Lemma}
\newtheorem{corollary}[theorem]{Corollary}
\newtheorem{proposition}[theorem]{Proposition}
\theoremstyle{definition}
\newtheorem{remark}[theorem]{Remark}
\newcommand{\dotcup}{\ensuremath{\mathaccent\cdot\cup}}
\newcommand{\Ecal}{\mathcal{E}}
\newcommand{\Lcal}{\mathcal{L}}
\newcommand{\Mcal}{\mathcal{M}}
\newcommand{\Ncal}{\mathcal{N}}
\newcommand{\Pcal}{\mathcal{P}}
\newcommand{\Scal}{\mathcal{S}}
\newcommand{\Vcal}{\mathcal{V}}
\newcommand{\Xcal}{\mathcal{X}}
\newcommand{\Ycal}{\mathcal{Y}}
\newcommand{\Zcal}{\mathcal{Z}}
\newcommand{\N}{\mathbb{N}}
\newcommand{\R}{\mathbb{R}}
\newcommand{\supp}{\operatorname{supp}}
\newcommand{\RBM}{\ensuremath{\operatorname{RBM}}}
\newcommand{\DBN}{\ensuremath{\operatorname{DBN}}}
\newcommand{\Dir}{\operatorname{Dir}}
\newcommand{\aXu}{\mathbf{x}}
\newcommand{\aYu}{\mathbf{y}}
\newcommand{\aVu}{\mathbf{v}}
\newcommand{\rd}{\ensuremath{\mathrm{d}}}
\newcommand{\captionfonts}{\normalsize}
\long\def\@makecaption#1#2{%
  \vskip\abovecaptionskip
  \sbox\@tempboxa{{\captionfonts #1: #2}}%
  \ifdim \wd\@tempboxa >\hsize
    {\captionfonts #1: #2\par}
  \else
    \hbox to\hsize{\hfil\box\@tempboxa\hfil}%
  \fi
  \vskip\belowcaptionskip}
\renewenvironment{table}
    {%
     \@float{table}}
    {\end@float}
\begin{document}
\hspace{13.9cm}1

\ \vspace{20mm}\\

{\LARGE Universal Approximation Depth and Errors of Narrow Belief Networks with Discrete Units}

\ \\
{\bf \large Guido F. Mont\'ufar$^{\displaystyle 1}$}\\
{$^{\displaystyle 1}$Department of Mathematics, Pennsylvania State University, University Park, PA 16802, USA.}\\
%

{\bf Keywords:} Deep belief network, restricted Boltzmann machine, universal approximation, representational power, Kullback-Leibler divergence, $q$-ary variable

\thispagestyle{empty}
\markboth{}{NC instructions}
\ \vspace{-0mm}\\
%
\begin{center} {\bf Abstract} \end{center}
We generalize recent theoretical work on the minimal number of layers of narrow deep belief networks that can approximate any probability distribution on the states of their visible units arbitrarily well. 
We relax the setting of binary units~\citep[][]{Hinton2008,LeRoux2008,LeRoux2010,Montufar2011} to units with arbitrary finite state spaces, and the vanishing approximation error to an arbitrary approximation error tolerance.  
For example, we show that a $q$-ary deep belief network with $L\geq 2+\frac{q^{\lceil m- \delta \rceil}-1}{q-1}$ layers of width $n \leq m + \log_q(m) + 1$ for some $m\in\N$ can approximate any probability distribution on $\{0,1,\ldots,q-1\}^n$ without exceeding a Kullback-Leibler divergence of $\delta$. 
Our analysis covers discrete restricted Boltzmann machines and na\"ive Bayes models as special cases. 

\section{Introduction}
A {\em deep belief network} (DBN)~\citep[][]{Hinton2006} is a layered stochastic network with undirected bipartite interactions between the units in the top two layers, and directed bipartite interactions between the units in all other subsequent pairs of layers, directed towards the bottom layer. 
The top two layers form a {\em restricted Boltzmann machine} (RBM)~\citep[][]{Smolensky1986}. 
The entire network defines a model of probability distributions on the states of the units in the bottom layer, the {\em visible} layer. 
When the number of units in every layer has the same order of magnitude, the network is called {\em narrow}. 
The {\em depth} refers to the number of layers. 
Deep network architectures are believed to play a key role in information processing of intelligent agents, see~\citep{Bengio-2009} for an overview on this exciting topic. 
DBNs were the first deep architectures to be envisaged together with an efficient unsupervised training algorithm~\citep{Hinton2006}. 
Due to their restricted connectivity, it is possible to greedily train their layers one at the time, and in this way, identify remarkably good parameter initializations for solving specific tasks~\cite[see][]{Bengio2007}. 
The ability to train deep architectures efficiently has pioneered a great number of applications in machine learning and in the booming field {\em deep learning}. 

The representational power of neural networks has been studied for several decades, whereby their universal approximation properties have received special attention. 
For instance, a well known result~\citep[][]{DBLP:journals/nn/HornikSW89} shows that multilayer feedforward networks with one exponentially large layer of hidden units are universal approximators of Borel measurable functions. 
Although universal approximation has a limited importance for practical purposes,\footnote{Where a more or less good approximation of a small set of target distributions is often sufficient, or where the goal is not to model data directly but rather to obtain abstract representations of data.} it plays an important role as warrant for consistency and sufficiency of the complexity attainable by specific classes of learning systems. 
Besides the universal approximation question, it is natural to ask ``how well is a given network able to approximate certain classes of probability distributions?'' 
This note pursues an account on the ability of DBNs to approximate probability distributions. 

The first universal approximation result for deep and narrow sigmoid belief networks is due to~\citet[][]{Hinton2008}. 
They showed that a narrow sigmoid belief network with $3(2^n-1)+1$ layers can represent probability distributions arbitrarily close to any probability distribution on the set of length $n$ binary vectors. 
Their result shows that not only exponentially wide and shallow networks are universal approximators~\citep{DBLP:journals/nn/HornikSW89}, but also exponentially deep and narrow ones are. 
Subsequent work has studied the optimality question ``how deep is deep enough?,'' with improved universal approximation depth bounds by~\citet[][]{LeRoux2010} and later by~\citet[][]{Montufar2011}, which we will discuss below in more detail. 
These papers focus on the minimal depth of narrow DBN universal approximators with binary units; 
that is, the number of layers that these networks must have in order to be able to represent probability distributions arbitrarily close to any probability distribution on the states of their visible units. 
The present note complements that analysis in two ways: 

First, instead of asking for the minimal size of universal approximators, we ask for the minimal size of networks that can approximate any distribution to a given error tolerance, 
treating the universal approximation problem as the special case of zero error tolerance. 
This analysis gives a theoretical basis on which to balance model accuracy and parameter count. 
For comparison, universal approximation is a binary property which always requires an exponential number of parameters. 
As it turns out, our analysis also allows us to estimate the expected value of the model approximation errors incurred when learning classes of distributions, say low-entropy distributions, with networks of given sizes. 

Second, we consider networks with finite valued units, called discrete or multinomial DBNs, including binary DBNs as special cases. 
Non-binary units serve, obviously, to encode non-binary features directly, which may be interesting in multi-channel perception, e.g., color-temperature-distance sensory inputs. 
Additionally, the interactions between discrete units can carry much richer relations that those between binary units. In particular, within the non-binary discrete setting, DBNs, RBMs, and na\"ive Bayes models can be seen as representatives of the same class of probability models. 

\medskip

This paper is organized as follows. 
Section~\ref{section:preliminaries} gives formal definitions, 
before we proceed to state our main result Theorem~\ref{theorem:main} in Section~\ref{section:main}: a bound on the approximation errors of discrete DBNs. 
A universal approximation depth bound follows directly. 
After this, a discussion of the result is given, together with a sketch of the proof. 
The proof entails several steps of independent interest, developed in the next sections. 
Section~\ref{section:topRBM} addresses the representational power and approximation errors of RBMs with discrete units. 
Section~\ref{section:star} studies the models of conditional distributions represented by feedforward discrete stochastic networks (DBN layers). 
Section~\ref{section:sharing} studies concatenations of layers of feedforward networks and elaborates on the patterns of probability sharing steps (transformations of probability distributions) that they can realize. 
Section~\ref{section:theDBN} concludes the proof of the main theorem and gives a corollary about the expectation value of the approximation error of DBNs. 
Section~\ref{section:experiments} presents an empirical validation scheme and tests the approximation error bounds numerically on small networks. 

\section{Preliminaries}
\label{section:preliminaries}

\begin{figure}
\centering
\includegraphics{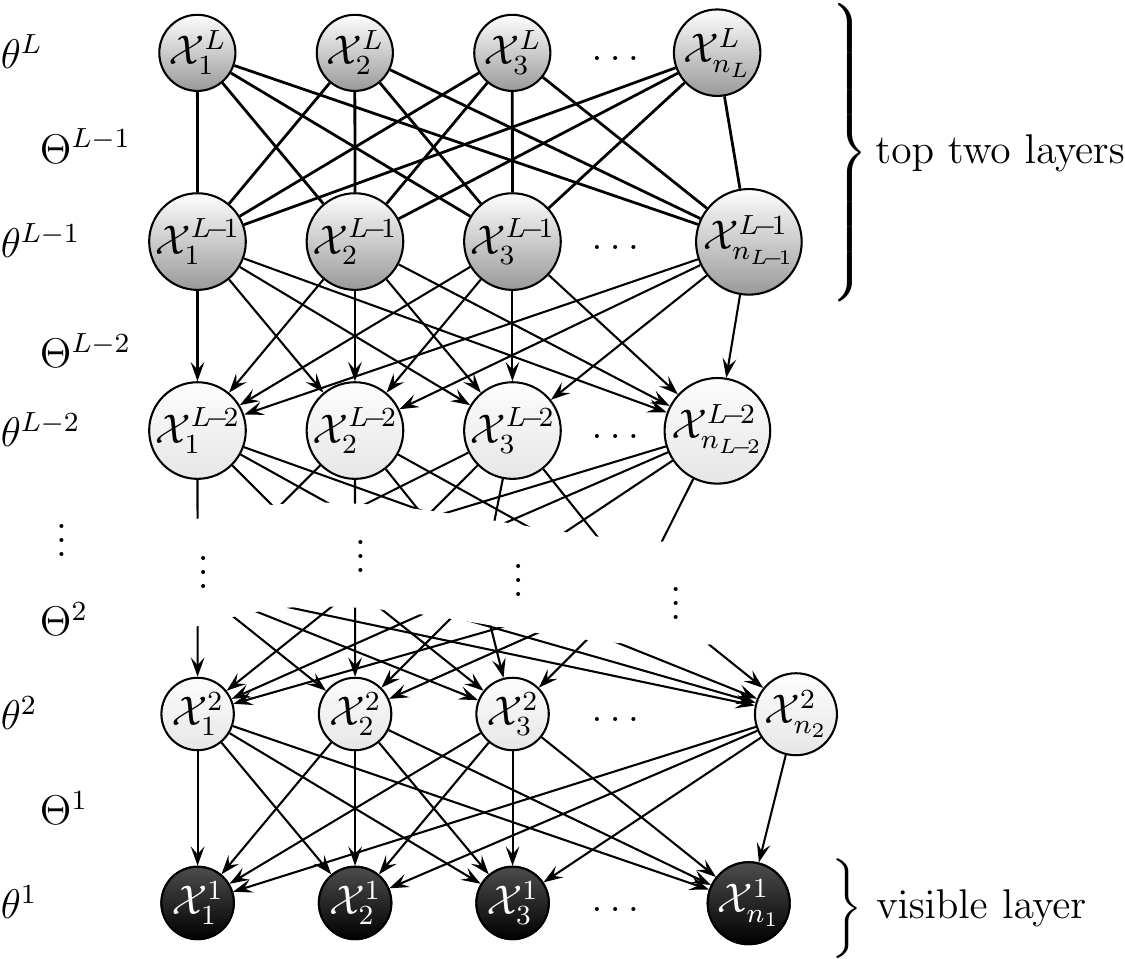}
\caption{Graphical representation of a discrete DBN probability model. 
Each node represents a unit with the indicated state space. 
The top two layers have undirected connections; they correspond to the term $p_{L-1,L}$ described in eq.~\eqref{eq:rbmeqjoint}. 
All other layers receive directed connections, corresponding to the terms $p_l$, $l\in[L-2]$ described in eq.~\eqref{eq:layercondrs}. 
Only the bottom layer is visible. 
}
\label{Figure1}
\end{figure}

A few formal definitions are necessary before proceeding. 
Given a finite set $\Xcal$, we denote $\Delta(\Xcal)$ the set of all probability distributions on $\Xcal$. 
A {\em model} of probability distributions on $\Xcal$ is a subset $\Mcal\subseteq\Delta(\Xcal)$. 
Given a pair of distributions $p,q\in\Delta(\Xcal)$, 
the Kullback-Leibler divergence from $p$ to $q$ is defined as 
$D(p\|q) := \sum_{x\in\Xcal}p(x)\log\frac{p(x)}{q(x)}$ 
when $\supp(p)\!\subseteq\!\supp(q)$, and $D(p\|q)\!:=\!\infty$ otherwise. 
The divergence from a distribution $p$ to a model $\Mcal\subseteq\Delta(\Xcal)$ is defined as $D(p\|\Mcal) :=\inf_{q\in\Mcal} D(p\|q)$.   
The divergence of any distribution on $\Xcal$ to $\Mcal$ is bounded by 
$$D_\Mcal:=\sup_{p\in\Delta(\Xcal)} D(p\|\Mcal).$$ 
We refer to $D_\Mcal$ as the {\em  universal} or  {\em maximal} {\em approximation error} of $\Mcal$. 
The model $\Mcal$ is called a {\em universal approximator} 
of probability distributions on $\Xcal$ iff $D_\Mcal=0$. 

\medskip 

A discrete DBN probability model is specified by a number of layers (the depth of the network), 
the number of units in each layer (the width of each layer), 
and the state space of each unit in each layer. 
Let $L\in\N$, $L\geq2$ be the number of layers. 
We imagine these layers arranged as a stack with layer $1$ at the bottom (this will be the visible layer) and layer $L$ at the top (this will be the {\em deepest} layer). See Figure~\ref{Figure1}. 
For each $l\in\{1,\ldots, L\}=:[L]$, 
let $n_l\in\N$ be the number of units in layer $l$. 
For each $i\in [n_l]$, let $\Xcal_{i}^{l}$, $|\Xcal_i^l|<\infty$ be the state space of unit $i$ in layer $l$. 
We denote the joint state space of the units in layer $l$ by  $\Xcal^l=\Xcal^l_1\times\cdots\times\Xcal^l_{n_l}$, and write $x^l=(x^l_1,\ldots,x^l_{n_l})$ for a state from $\Xcal^l$. 
We call a unit {\em $q$-valued} or {\em $q$-ary} if its state space has cardinality $q$, and assume that $q$ is a finite integer larger than one. 

In order to proceed with the definition of the DBN model, 
we consider the mixed graphical model with undirected connections between the units in the top two layers $L$ and $L-1$, 
and directed connections from the units in layer $l+1$ to the units in layer $l$ for all $l\in[L-2]$. 
This model consists of joint probability distributions on the states  $\Xcal=\Xcal^1\times\cdots \Xcal^L$ of all network units, parametrized by 
a collection of real matrices and vectors $\Theta = \{ \Theta^1, \ldots, \Theta^{L-1},\theta^1,\ldots, \theta^L  \}$. 
For each $l\in[L-1]$, the matrix $\Theta^l$ contains the interaction weights between units in layers $l$ and $l+1$. 
It consists of row blocks $\Theta^l_i\in\R^{(|\Xcal^l_i|-1)\times (\sum_{j\in[n_{l+1}]} (|\Xcal^{l+1}_j|-1))}$ for all $i\in[n_l]$. 
For each $l\in[L]$, the row vector $\theta^{l}$ contains the bias weights of the units in layer $l$. 
It consists of blocks $\theta^l_i\in\R^{|\Xcal^l_i|-1}$ for all $i\in[n_l]$. 

Note that the bias of a unit with state space $\Xcal^l_i$ is a vector with $|\Xcal^l_i|-1$ entries, 
and the interaction of a pair of units with state spaces $\Xcal_i^l$ and $\Xcal_j^{l+1}$ is described by a matrix of order $(|\Xcal_i^l|-1)\times (|\Xcal_j^{l+1}|-1)$. 
The number of interaction and bias parameters in the entire network adds to $\sum_{l=1}^{L-1} ( \sum_{i\in[n_l]}(|\Xcal^l_i| - 1) )(1+ \sum_{j\in[n_{l+1}]}(|\Xcal^{l+1}_j| - 1) )  +  \sum_{i\in[n_L]}(|\Xcal^L_i| - 1)$. 

For any choice $\Theta$ of these parameters, 
the corresponding probability distribution on the states of all units is 
\begin{multline}
p(x^1,\ldots,x^L; \Theta) \; = \; 
p_{L-1,L}(x^{L-1},x^L; \Theta^{L-1},\theta^{L-1},\theta^L)
\prod_{l=1}^{L-2}p_l(x^l|x^{l+1};\Theta^l,\theta^l) \\\quad \text{for all $(x^1,\ldots,x^L)\in\Xcal^1\times\cdots\times\Xcal^L$}; \label{eq:jointeq} 
\end{multline}
where 
\begin{multline}
p_{L-1,L}(x ,y ; \Theta^{L-1},\theta^{L-1},\theta^L) \;  = \; 
 \frac{\exp( \aXu^\top \,\Theta^{L-1} \,\aYu  + \theta^{L-1} \,\aXu + \theta^{L} \,\aYu)}{Z(\Theta^{L-1}, \theta^{L-1},\theta^L)} \\ \quad\text{for all $(x,y)\in\Xcal^{L-1}\times\Xcal^L$}; \label{eq:rbmeqjoint} 
\end{multline}
and  
\begin{multline}
p_l(x |y ; \Theta^{l},\theta^l) \; = \;   \prod_{i\in[n_l]} p_{l,i}(x_i|y ;\Theta_i^l, \theta_i^l ) \\
\quad\text{for all $x\in\Xcal^l$ and $y\in\Xcal^{l+1}$; \quad for each $l\in[L-2]$}; \label{eq:layercondrs} 
\end{multline} 
with factors given by 
\begin{gather}
p_{l,i}(x_i|y ;\Theta_i^l, \theta_i^l ) \; = \; \frac{\exp( \aXu_i^\top \, \Theta^l_i \, \aYu  +  \theta^l_i \,\aXu_i)}{Z(\Theta^l_i \,\aYu, \theta^l_i )}   
\quad\text{for all $x_i\in\Xcal^l_i$ and $y\in\Xcal^{l+1}$}. \label{eq:starcond}
\end{gather}
Here we use following notation. 
Given a state vector $x=(x_1,\ldots,x_n)$ of $n$ units with joint state space $\Xcal_1\times\cdots\times\Xcal_n=\{0,1,\ldots, q_1-1\}\times\cdots\times\{0,1,\ldots, q_n-1\}$, 
$\aXu$ denotes the $x$-th column of a minimal matrix of sufficient statistics for the independent distributions of these $n$ units. 
To make this more concrete, 
we set $\aXu$ equal to a column vector with blocks $\aXu_1$, \ldots,  $\aXu_n$, where $\aXu_i=(\delta_{y_i}(x_i))_{y_i\in\Xcal_i\setminus\{0\}}$ is the one-hot representation of $x_i$ without the first entry, for all $i\in[n]$. 
For example, if $ x=(x_1,x_2)=(1,0) \in\Xcal_1\times\Xcal_2=\{0,1,2\}\times\{0,1,2\}$, then 
$\aXu=\left[\begin{smallmatrix} \aXu_1\\\aXu_2\end{smallmatrix}\right]$, 
with $\aXu_1= \left[\begin{smallmatrix}1\\0 \end{smallmatrix}\right]$ and $\aXu_2= \left[\begin{smallmatrix}0\\0\end{smallmatrix}\right]$. 

The function 
\begin{align}
Z(\Theta^{L-1}, \theta^{L-1},\theta^L) &\; = \;
\sum_{x\in\Xcal^{L-1}, y\in\Xcal^L} \exp( \aXu^\top \,\Theta^{L-1} \,\aYu + \theta^{L-1} \,\aXu + \theta^{L} \,\aYu)
\intertext{normalizes the probability distribution $p_{L-1,L}(\cdot;\Theta^{L-1},\theta^{L-1},\theta^{L})\in\Delta(\Xcal^{L-1}\times\Xcal^L)$ from eq.~\eqref{eq:rbmeqjoint}. 
Likewise, the function }
Z(\Theta_i^l \aYu, \theta^l_i) & \; = \; \sum_{x_i\in\Xcal^l_i}\exp( \aXu_i^\top \, \Theta^l_i \, \aYu  + \theta_i^{l} \,\aXu )
\end{align}
normalizes the probability distribution $p_{l,i}(\cdot|y;\Theta_i^l, \theta_i^l)\in\Delta(\Xcal^l_i)$ from eq.~\eqref{eq:starcond} for each $i\in[n_l]$ and $l\in[L-2]$. 

The  marginal of the distribution 
$p(\cdot;\Theta)\in\Delta(\Xcal^1\times\cdots\times\Xcal^L)$ from eq.~\eqref{eq:jointeq} on the states $\Xcal^1$ of the units in the first layer is given by 
\begin{equation}
P(x^1 ; \Theta) \; = \; \sum_{( x^2,\ldots,x^L)\in\Xcal^2\times\cdots\times\Xcal^L}p(x^1,\ldots,x^L ; \Theta)\quad\text{for all $x^1\in\Xcal^1$}. \label{eq:visdistr}
\end{equation}
The discrete DBN probability model with $L$ layers of widths $n_1,\ldots, n_L$ and state spaces $\Xcal^1, \ldots, \Xcal^L$, 
is the set of probability distributions 
$P(\cdot ; \Theta)\in\Delta(\Xcal^1)$ expressible by eq.~\eqref{eq:visdistr} for all possible choices of the parameter $\Theta$. 
Intuitively, this set is a linear projection of a manifold parametrized by $\Theta$, and may have self-intersections or other singularities. 

 \medskip 

The discrete DBN probability model with $L=2$ is a discrete RBM probability model. \label{page:RBMdef} 
This model consists of the marginal distributions on $\Xcal^{L-1}$ of the distributions $p_{L-1,L}(\cdot;\Theta^{L-1},\theta^{L-1},\theta^L)$ from eq.~\eqref{eq:rbmeqjoint} for all possible choices of $\Theta^{L-1}$, $\theta^{L-1}$, and $\theta^L$. 

When $L>2$, the distributions on $\Xcal^{L-1}$ defined by the top two DBN layers can be seen as the inputs of the 
stochastic maps defined by the conditional distributions $p_{L-2}(\cdot |\cdot;\Theta^{L-2},\theta^{L-2})$ from eq.~\eqref{eq:layercondrs}. The outputs of these maps are probability distributions on $\Xcal^{L-2}$
that can be seen as the inputs of the stochastic maps defined by the next lower layer and so forth. 
The discrete DBN probability model can be seen as the set of images of a discrete RBM probability model by a family of sequences of stochastic maps. 

\medskip

The following simple class of probability models will be useful to study the approximation capabilities of DBN models. 
Let $\varrho=\{A_1,\ldots,A_N\}$ be a partition of a finite set $\Xcal$. 
The {\em partition model} $\Pcal$ with partition $\varrho$ is the set of probability distributions on $\Xcal$ which have constant value on each $A_i$. 
Geometrically, this is  the simplex with vertices $\mathds{1}_{A_i}/|A_i|$ for all $i\in[N]$, where $\mathds{1}_{A_i}$ is the indicator function of $A_i$. 
The {\em coarseness} of $\Pcal$ is $\max_i |A_i|$. 
Unlike many statistical models, partition models have a well understood Kullback-Leibler divergence. 
If $\Pcal$ is a partition model of coarseness $c$, then $D_{\Pcal} = \log(c)$. \label{lem:part-mod-max-KL} 
Furthermore, partition models are known to be optimally approximating exponential families, in the sense that they minimize the universal approximation error among all closures of exponential families of a given dimension~\citep[see][]{Rauh13:Optimal_Expfams}. 

\section{Main Result}
\label{section:main}

The starting point of our considerations is the following result for binary DBNs:  

\begin{theorem}
\label{theorem:1}
A deep belief network probability model with $L$ layers of binary units of width $n =2^{k-1}+k$ (for some $k\in\N$) is a universal approximator of probability distributions on $\{0,1\}^n$ whenever 
$L\geq  1 + 2^{2^{k-1}}$. 
\end{theorem}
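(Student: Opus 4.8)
The plan is to prove the statement by an explicit construction that realizes an arbitrary target distribution $p\in\Delta(\{0,1\}^n)$ as the visible marginal of eq.~\eqref{eq:visdistr} after a sequence of probability-sharing steps, one per feedforward layer, applied to a simple base distribution supplied by the top RBM. First I would split the $n=2^{k-1}+k$ visible bits into $2^{k-1}$ \emph{content} bits $\mathbf{c}\in\{0,1\}^{2^{k-1}}$ and $k$ \emph{address} bits $\mathbf{a}\in\{0,1\}^k$, so that $\{0,1\}^n$ decomposes into $M:=2^{2^{k-1}}$ content-slices $\{\mathbf{c}\}\times\{0,1\}^k$, each carrying $2^k$ states, and $2^n=M\cdot 2^k$. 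Fixing a Gray-code enumeration $\mathbf{c}^{(1)},\ldots,\mathbf{c}^{(M)}$ of the content values (consecutive values differing in exactly one bit) organizes the target probabilities into $M$ blocks of $2^k$ numbers, to be installed one block per layer. This matches the budget exactly: with the two top layers forming the RBM seed and the $L-2$ conditional layers $p_l$ of eq.~\eqref{eq:layercondrs} doing the installation, $M-1$ installation steps suffice precisely when $L-2\geq M-1$, i.e.\ $L\geq 1+M=1+2^{2^{k-1}}$.

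The core is a \emph{sharing lemma}: one product-form layer of width $n$, through eqs.~\eqref{eq:layercondrs}--\eqref{eq:starcond}, can approximate arbitrarily well a near-deterministic stochastic map on $\Delta(\{0,1\}^n)$ that leaves the already-finalized slices fixed, finalizes the conditional $p(\cdot\mid\mathbf{c}^{(j)})$ on the current slice, and transports the residual probability mass onto the Gray-adjacent slice $\mathbf{c}^{(j+1)}$. Because the two slices differ in a single content bit while the address bits range freely, the same layer carries out this transfer for all $2^k$ address states in parallel, which is the source of the width-versus-depth saving. I would prove the lemma by exhibiting explicit interaction and bias weights: driving the weights on the distinguishing content bit to infinity makes the corresponding factor $p_{l,i}(\cdot\mid y)$ of eq.~\eqref{eq:starcond} tend to a deterministic copy-or-flip, the address units are wired to reproduce the prescribed conditional, and a limiting argument shows the induced map converges to the desired transfer.

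With the lemma in hand I would assemble the construction inductively. The top RBM (layers $L$ and $L-1$), described by eq.~\eqref{eq:rbmeqjoint}, is set to emit a seed concentrated on the first slice $\{\mathbf{c}^{(1)}\}\times\{0,1\}^k$ carrying the conditional $p(\cdot\mid\mathbf{c}^{(1)})$; concentrating the content bits costs only biases, while its $n\geq 2^{k-1}$ hidden units suffice for universal approximation on the $k$-bit address block, so the seed is approximable. Then, for $j=1,\ldots,M-1$, the $j$-th conditional layer applies the sharing lemma, finalizing slice $\mathbf{c}^{(j)}$ and moving the remaining mass to slice $\mathbf{c}^{(j+1)}$. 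After all layers act, the visible marginal of eq.~\eqref{eq:visdistr} reproduces $p$ up to an error controlled by the finite parameters, and letting those parameters grow drives the Kullback-Leibler divergence to $p$ to zero, which gives $D_{\Mcal}=0$ and hence universal approximation.

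The step I expect to be the main obstacle is the sharing lemma, and specifically showing that the simultaneous per-address transfer together with the reshaping of the conditional can be realized by a single \emph{product-form} layer (whose output bits are conditionally independent given its input) in the parameter limit, without the $2^k$ parallel operations interfering. A second delicate point is error control: since the $M$ near-identity maps are composed, I must verify that the approximation errors do not accumulate faster than they can be driven down, so that the final divergence genuinely vanishes; care is likewise needed at the seam between the RBM seed and the first sharing layer, and in checking that each map leaves the previously finalized slices intact to the required accuracy.
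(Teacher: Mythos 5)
Your high-level bookkeeping is fine: the RBM seed on a $2^k$-state block, one Gray-code step per directed layer, and the count $L-2\geq 2^{2^{k-1}}-1$ do reproduce the bound, and the top RBM can indeed supply the seed. The genuine gap is the sharing lemma itself: the slice-level map you ask of a single layer is not realizable by a product-form layer, and this is not an ``interference'' issue that a cleverer limiting argument fixes. A DBN layer is conditionally factorized, and each output unit's conditional is a sigmoid of an \emph{affine} function of the input state (eq.~\eqref{eq:starcond}). Two consequences break your step. First, from any single input state the layer emits a \emph{product} distribution on the $k$ address bits, so the address units cannot be ``wired to reproduce the prescribed conditional'' $p(\cdot\mid\mathbf{c}^{(j)})$; an arbitrary joint conditional can only arise as a mixture over input address states, which forces the mass arriving at slice $j$ to dominate, address by address, the mass to be left behind. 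Second, in that regime the split between staying and moving mass must be carried by the flip bit, and restricted to the slice $\{\mathbf{c}^{(j)}\}\times\{0,1\}^k$ its flip probability has the form $\sigma(w^\top\mathbf{a}+c)$, a family with only $k+1$ parameters, whereas the required per-address ratios $\sum_{i>j}p(\mathbf{c}^{(i)},\mathbf{a})\big/\sum_{i\geq j}p(\mathbf{c}^{(i)},\mathbf{a})$ are $2^k$ essentially arbitrary numbers. For generic targets these cannot be matched even approximately, so the induction already fails at the second slice. Note also that the flip bit is exactly the unit you must \emph{not} saturate: driving its weights to infinity gives a deterministic copy-or-flip and hence transfers all of the mass or none of it; it is the other units that get saturated, while the flip bit keeps finite, tuned weights.

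This obstruction is precisely what the paper's machinery is built to avoid, and it explains why its proof is more intricate than one slice per layer. In Lemma~\ref{lemma:star}, arbitrary per-input-state output distributions are claimed only across a \emph{one-dimensional} cylinder set (input states differing in a single coordinate), because the interaction block attached to that coordinate is a lookup table: one parameter vector per value of that coordinate. The rate of $2^k$ new states per layer is then achieved not by parallelism over the $2^k$ address states of a common slice, but by superposing about $2^{k-1}$ such elementary sharing steps in \emph{different output units} acting on \emph{disjoint} one-dimensional cylinder sets (Theorem~\ref{theorem:sharing}), scheduled by the staggered sequences $\tilde G^s$ of Section~\ref{section:sharing}, with one always-free coordinate doubling each step; Theorem~\ref{theorem:1} then follows as the case $q=2$, $m=2^{k-1}$ of Theorem~\ref{theorem:main} (the paper otherwise attributes it to \citet{Montufar2011}, whose binary proof has the same distributed-across-units structure, superposing about $2n$ pairwise sharing steps per layer). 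To repair your argument you would have to abandon the single slice-to-slice transfer and re-derive the schedule in terms of such one-coordinate sharing steps spread over the width of the layer; as written, the plan does not go through.
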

Note that 
\begin{equation}
\frac{2^n}{2 (n- \log_2(n))} \leq 2^{2^{k-1}} \leq  \frac{2^n}{2( n- \log_2(n)-1 )}.
\end{equation} 
This result is due to~\citet[][Theorem~2]{Montufar2011}. 
It is based on  a refinement of previous work by~\citet[][]{LeRoux2010}, who  obtained the bound $L\geq 1+ \frac{2^n}{n}$ when $n$ is a power of two. 

The main result of this paper is following generalization of Theorem~\ref{theorem:1}. 
Here we make the simplifying assumption that all layers have the same width $n$ 
and the same state space. 
The result holds automatically for DBNs with wider hidden layers or hidden units with larger state spaces. 

\begin{theorem}
\label{theorem:main}
Let $\DBN$ be a deep belief network probability model with $L\in\N$, $L\geq 2$ layers of width $n\in\N$. 
Let the $i$-th unit of each layer have state space $\{0,1,\ldots, q_i-1\}$, $q_i\in\N$, $2\leq q_i<\infty$, for each $i\in[n]$. 
Let  $m$ be any integer with $n\geq m\geq\prod_{j=m+2}^{n}q_j$, and let $q =q_1\geq \cdots\geq q_m$. 
If $L\geq 2+\frac{q^S-1}{q-1}$ for some $S\in\{0,1,\ldots,m\}$, 
then the probability model $\DBN$ can approximate each element of a partition model of coarseness $\prod_{j\in[{m-S}]}q_j$ arbitrarily well. 
The Kullback-Leibler divergence from any distribution on $\{0,1,\ldots,q_1-1\}\times\cdots\times\{0,1,\ldots,q_n-1\}$ to $\DBN$ is bounded by 
\begin{equation*}
D_{\DBN} \leq \log (\prod_{j\in[m-S]} q_j). 
\end{equation*}
\noindent
In particular, this DBN probability model is a universal approximator whenever 
\begin{equation*}
L\geq 2+\frac{q^m-1}{q-1}. 
\end{equation*}
\end{theorem}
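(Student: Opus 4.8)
The plan is to exhibit, inside the closure of $\DBN$, a partition model $\Pcal$ of coarseness $\prod_{j\in[m-S]}q_j$, and then read off both conclusions at once. The divergence bound follows because a partition model of coarseness $c$ satisfies $D_{\Pcal}=\log(c)$, so $D_{\DBN}\le D_{\Pcal}=\log(\prod_{j\in[m-S]}q_j)$; and the universal-approximation statement is the special case $S=m$, where the coarseness is the empty product $\prod_{j\in[0]}q_j=1$, the partition model is the full simplex $\Delta(\Xcal^1)$, and the bound reads $D_{\DBN}\le\log 1=0$. Everything thus reduces to a containment statement about the DBN model, which I would build layer by layer following the factorization of eq.~\eqref{eq:jointeq}: the top two layers form a discrete RBM, and each lower layer acts on distributions over $\Xcal^{l+1}$ through the stochastic map $p_l(\cdot\mid\cdot)$ of eq.~\eqref{eq:layercondrs}.

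The base case is $S=0$, where $2+\frac{q^0-1}{q-1}=2$ and the claim is that the top RBM alone approximates a partition model of coarseness $\prod_{j\in[m]}q_j$; here the partition distinguishes the tail coordinates $m+1,\dots,n$ while the blocks range over the first $m$ coordinates. I would establish this in Section~\ref{section:topRBM} by analysing the representational power of a discrete RBM of width $n$, with the hypothesis $m\ge\prod_{j=m+2}^n q_j$ providing exactly the room to encode the required partition structure, using the tail units as an index, in analogy with the binary construction behind Theorem~\ref{theorem:1}.

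The inductive step passes from $S$ to $S+1$: the coarseness drops by the factor $q_{m-S}\le q$ (the partition is refined along coordinate $m-S$), while the layer budget grows by $q^S$, since $\frac{q^{S+1}-1}{q-1}-\frac{q^S-1}{q-1}=q^S$. I would realize each refinement by a block of $q^S$ feedforward layers, each performing a single probability-sharing step. Section~\ref{section:star} would characterize what one narrow $q$-ary layer can do to an input distribution, namely redistribute mass among the $q$ states of one coordinate while leaving the others essentially fixed (the ``star'' conditional model), and Section~\ref{section:sharing} would show that concatenating such layers realizes a prescribed pattern of sharing steps. Summing the per-level budgets gives $\sum_{s=0}^{S-1}q^s=\frac{q^S-1}{q-1}$ feedforward layers on top of the $2$ RBM layers, which is exactly the stated bound; the conclusion is then assembled in Section~\ref{section:theDBN}.

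The main obstacle is the sharing count in the inductive step. A naive ``one layer per block'' accounting fails because at the $S$-th stage the number of blocks, $\prod_{j=m-S+1}^n q_j$, exceeds $q^S$ by the tail factor $\prod_{j=m+1}^n q_j$. The resolution I would pursue is to have a single layer execute the same sharing operation in parallel across all blocks that agree on the main coordinates and differ only on the tail-indexed coordinates; the hypothesis $m\ge\prod_{j=m+2}^n q_j$ is precisely what guarantees the tail units can serve as such an index without inflating the width beyond $n$. Making this parallel sharing simultaneously valid for every block, and verifying that the resulting stochastic maps compose to carry the RBM's output onto an arbitrary element of the target partition model, is the delicate part of the argument.
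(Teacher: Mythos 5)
Your global strategy --- realize a partition model of coarseness $\prod_{j\in[m-S]}q_j$ inside the closure of $\DBN$, invoke the fact that a partition model of coarseness $c$ has maximal divergence $\log(c)$, and obtain universal approximation as the case $S=m$ --- is exactly the paper's (Theorem~\ref{theorem:submDBNs} plus the divergence formula for partition models), and your base case $S=0$ is Lemma~\ref{lemma:distrRBMs}. The genuine gap is in your inductive step, at the point you yourself flag as delicate. By eq.~\eqref{eq:starcond}, the conditional distribution of a single output unit is log-affine in the one-hot encoding $\aYu$ of its inputs: its natural parameter has the form $\theta^\top+\sum_j\Theta_j\aYu_j$. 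Hence a unit can be programmed to emit an \emph{arbitrary} prescribed output distribution as a function of the value of \emph{one} input coordinate (this is Lemma~\ref{lemma:star}, whose conditioning set is a one-dimensional cylinder set; see also the affine-independence requirement in the Remark following it), but it cannot respond arbitrarily to the \emph{joint} value of several coordinates: within one of your refinement layers, the blocks that require independently chosen redistribution weights are indexed by the joint values of coordinates $m+1,\ldots,n$, of which there are $\prod_{j=m+1}^{n}q_j$, while the achievable dependence on those coordinates spans only $1+\sum_{j=m+1}^{n}(q_j-1)$ affine dimensions --- insufficient whenever $n\geq m+2$, i.e., precisely in the regime the theorem is about. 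Your proposed fix, applying \emph{the same} sharing operation to all blocks that differ only in the tail, is therefore forced, but it proves too little: the distributions you construct have refined-block weights that are constant across tail values (given the main coordinates and $y_{m+1}$), which is a proper subset of the target partition model. The induction then cannot be continued with the claimed hypothesis, and at $S=m$ you do not obtain the full simplex, so no universal approximation statement follows.

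The paper's way around this is the one structural idea missing from your outline, and it is not a refinement of your scheme but a different parallelization. Instead of refining one common coordinate for all blocks at a given stage, it runs $k=\prod_{j=m+2}^{n}q_j$ sharing sequences in parallel, one per tail value $y^{(s)}$, with \emph{cyclically shifted} coordinate orders (the sequences $\tilde G^s$ of Section~\ref{section:sharing}): at any given layer, the $k$ active sharing steps have pairwise distinct target units --- this is the hypothesis in Theorem~\ref{theorem:sharing} that $\{i_s\}_s$ be a set while the sources $\{j_s\}_s$ may repeat --- so each target unit serves a single tail value, and the arbitrariness of its weights only ever needs to be conditioned on the single always-free coordinate $m+1$, which one unit \emph{can} handle. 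The price is that the resulting partition is staggered: the set of fixed coordinates of a block is $\{s,\{s+1\},\ldots,\{s+r\}\}$ and thus depends on the tail value, so it is not your product partition; but it has the same coarseness $\prod_{j\in[m-S]}q_j$, and coarseness is all that the divergence bound and the $S=m$ case require. Your layer accounting ($q^S$ extra layers per stage, summing to $\frac{q^S-1}{q-1}$) agrees numerically with the paper's, but the count arises differently there: level $\kappa$ of each sequence consumes $q^{\kappa}$ layers because each value combination of that sequence's previously refined coordinates needs its own layer, the per-layer freedom residing only in coordinate $m+1$.
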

When all units are $q$-ary and the layer width is $n= q^{k-1} + k$ for some $k\in\N$, then the DBN probability model is a universal approximator of distributions on $\{0,1,\ldots, q-1 \}^n$ whenever $L\geq 2+\frac{q^{q^{k-1}}-1}{q-1}$. Note that 
\begin{equation}
\frac{q^n -1}{q(q-1)(n-\log_q(n))}\leq \frac{q^{q^{k-1}}-1}{q-1} \leq \frac{ q^n-1}{q(q-1)(n -\log_q(n)-1)}.  
\end{equation} 
The theorem is illustrated in Figure~\ref{Figure2}. 

\begin{figure}
\centering
\includegraphics[scale=1.1]{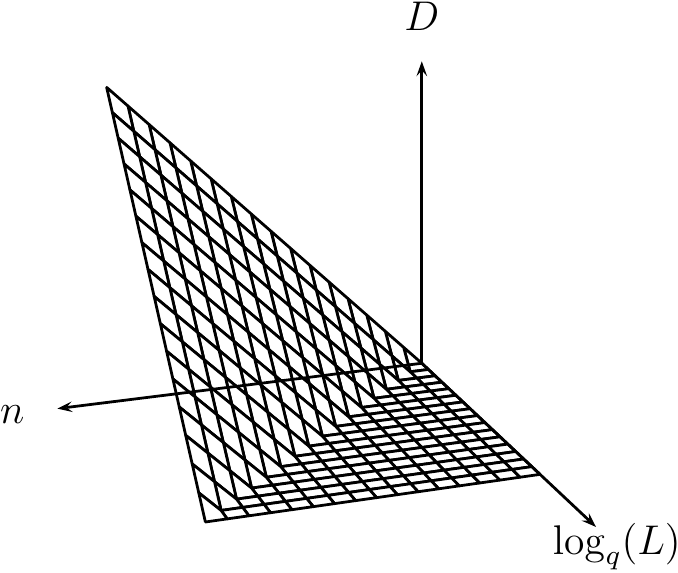} 
\caption{Qualitative illustration of Theorem~\ref{theorem:main}. 
Shown is the large-scale behaviour of the DBN universal approximation error  upper-bound as a function $D$ of  the layer width  $n$ and the logarithm of the number of layers $\log_q(L)$. 
Here it is assumed that the Kullback-Leibler divergence is computed in base $q$ logarithm and that all units are $q$-ary. 
The number of parameters of these DBNs scales with $Ln^2(q-1)^2$. 
}
\label{Figure2}
\end{figure}

\subsection*{Remarks}

The number of parameters of a $q$-ary DBN with $L$ layers of width $n$ is $(L-1)(n(q-1)+1)n(q-1)+n(q-1)$. 
Since the set of probability distributions on $\{0,1,\ldots,q-1\}^n$ has dimension $q^n -1$, the DBN model is full dimensional only if $L \geq\frac{q^n-1}{ n(q-1)(n(q-1)+2)}+1$. 
This is a parameter-counting lower bound for the universal approximation depth. 
Theorem~\ref{theorem:main} gives an upper bound for the minimal universal approximation depth. 
The upper bound from the theorem surpasses the parameter-counting lower bound by roughly a factor $n$. 
We think that the upper bound is tight, up to sub-linear factors, in consideration of the following. 
Probability models with hidden variables can have dimension strictly smaller than their parameter count (dimension defect). 
Moreover, in some cases even  full dimensional models represent only very restricted classes of distributions, as has been observed, for example, in binary tree models with hidden variables. 
It is known that for any prime power $q$, the smallest na\"ive Bayes model universal approximator of distributions on $\{0,1,\ldots,q-1\}^n$ has $q^{n-1}(n(q-1) +1) -1$ parameters~\citep[see][Theorem~13]{Montufar2010a}. 
Hence for these models the number of parameters needed to achieve universal approximation surpasses the corresponding parameter-counting lower bound $q^n/(n(q-1)+1)$ by a factor of order $n$. 

Computing tight bounds for the maximum of the Kullback-Leibler divergence is a notoriously challenging problem. 
This is even so for simple probability models without hidden variables, like independence models with mixed discrete variables. 
The optimality of our DBN error bounds is not completely settled at this point, but we think that they give a close description of the large-scale approximation error behaviour of DBNs. 
For the limiting case of one single layer with $n$ independent $q$-ary units, it is known that the maximal divergence is equal to $(n-1)\log(q)$~\citep[see][]{AyKnauf06:Maximizing_Multiinformation}, corresponding to the line $\log_q(L)=0$ in Figure~\ref{Figure2}. 
Furthermore, when our upper bounds vanish, they obviously are tight (corresponding to the points with value zero in Figure~\ref{Figure2}). 

Discrete DBNs have many hyperparameters (the layer widths and the state spaces of the units), which makes their analysis combinatorially intricate. 
Some of these intricacies are apparent from the floor and ceiling functions in our main theorem. 
This theorem tries to balance accuracy, generality, and clarity. 
In some cases, the bounds can be improved by exhausting 
the representational power gain per layer described in Theorem~\ref{theorem:sharing}. 
A more detailed and accurate account on the two-layer case (RBMs) is given in Section~\ref{section:topRBM}. 
In Section~\ref{section:theDBN} we give results describing probability distributions contained in the DBN model (Proposition~\ref{proposition:distrDBN}) and addressing the expectation value of the divergence (Corollary~\ref{corollary:expectations}). 
Section~\ref{section:experiments} contains an empirical discussion, together with the numerical evaluation of small models. 

\subsection*{Outline of the Proof}

We will prove Theorem~\ref{theorem:main} by first studying the individual parts of the DBN: the RBM formed by the top two layers (Section~\ref{section:topRBM}); the individual units with directed inputs (Section~\ref{section:star}); the {\em probability sharing} realized by stacks of layers (Section~\ref{section:sharing}); 
and finally, 
the sets of distributions of the units in the bottom layer (Section~\ref{section:theDBN}). 
The proof steps can be summarized as follows: 
\begin{list}{\labelitemi}{\leftmargin=1em}
\item Show that the top RBM can approximate any probability distribution with support on a set of the form $\Xcal_1\times\cdots\Xcal_k\times\underset{k+1}{\{0\}}\times\cdots\times\underset{n}{\{0\}}$ arbitrarily well. 

\item For a unit with state space $\Xcal_1$ receiving $n$ directed inputs, show that there is a choice of parameters for which the following holds for each state $h_n\in\Xcal_n$ of the $n$-th input unit: If the input vector is $(h_1,h_2,\ldots,h_n)$, then the unit outputs $h_1'$ with probability $p^{h_n}(h_1')$, where $p^{h_n}$ is an arbitrary distribution on $\Xcal_1$ for all $h_n\in\Xcal_n$. 

\item Show that there is a sequence of $\frac{q^m-1}{q-1}$ stochastic maps $p(h)\mapsto p(v)=\sum_h p(v|h)p(h)$ each of which superposes nearly $q n$ probability multi-sharing steps, which maps the probability distributions represented by the top RBM to an arbitrary probability distribution on $\Xcal_1\times\cdots\times\Xcal_n$. 

\item Show that the DBN approximates certain classes of tractable probability distributions arbitrarily well, and estimate their maximal approximation errors. 
\end{list}

The superposition of probability sharing steps is inspired by~\citep[][]{LeRoux2010}, together with the refinements of that work devised in~\citep[][]{Montufar2011}. 
By {\em probability sharing} we refer to the process of transferring an arbitrary amount of probability from a state vector $x'$ to another state vector $x''$. 
In contrast to the binary proofs, where each layer superposes about $2n$ sharing steps, here each layer superposes about $q n$ multi-sharing steps, whereby each multi-sharing step transfers probability from one state to $q-1$ states (when the units are $q$-ary). 
With this, a more general treatment of models of conditional distributions is required. 
Further, additional considerations are required in order to derive tractable submodels of probability distributions which allow to bound the DBN model approximation errors. 

\section{Restricted Boltzmann Machines}
\label{section:topRBM}

We denote by $\RBM_{\Xcal,\Ycal}$ the restricted Boltzmann machine probability model with hidden units $Y_1,\ldots,Y_m$ taking states in $\Ycal=\Ycal_1\times\cdots\times\Ycal_m$ and visible units $X_1,\ldots,X_n$ taking states in $\Xcal=\Xcal_1\times\cdots\times\Xcal_n$. 
Recall the definitions made in pg.~\pageref{page:RBMdef}. 
In the literature RBMs are defined by default with binary units; however, RBMs with discrete  units have appeared  in~\citep[][]{welling:exponential}, and their representational power has been studied in~\citep[][]{montufar2013discrete}.
The results from this section are closely related to the analysis given in~\citep[][]{montufar2013discrete}.  

\begin{theorem} 
\label{theorem:RBMs}
The model $\RBM_{\Xcal,\Ycal}$ can approximate any mixture distribution $p=\sum_{i=0}^m \lambda_i p_i$ arbitrarily well, where $p_0$ is any product distribution, and $p_i$ is any mixture of $(|\Ycal_i|-1)$ product distributions for all $i\in[m]$ satisfying $\supp(p_i)\cap\supp(p_j)=\emptyset$ for all $1\leq i < j\leq m$. 
\end{theorem}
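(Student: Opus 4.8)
\textbf{Proof proposal for Theorem~\ref{theorem:RBMs}.}

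The plan is to exhibit an explicit family of parameters for $\RBM_{\Xcal,\Ycal}$ whose marginal on $\Xcal$ approaches the prescribed mixture $p=\sum_{i=0}^m\lambda_i p_i$, exploiting the fact that each of the $m$ hidden units $Y_i$ with $|\Ycal_i|$ states contributes, when ``activated,'' a mixture of up to $|\Ycal_i|-1$ product distributions, while its ``off'' state (the state $0$) contributes the background product distribution $p_0$. The marginal of the joint RBM distribution on the visible units is $P(x)=\sum_{y\in\Ycal} p_{L-1,L}(x,y)$; because the interactions are bipartite, conditioning on a hidden state $y$ makes the visible units independent, so $P(\cdot\mid y)$ is always a product distribution on $\Xcal$. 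Thus $P$ is automatically a mixture indexed by the $\prod_i|\Ycal_i|$ hidden states, and the whole task is to select weights so that the mixture weights and the conditional product distributions realize the desired target.

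First I would set up coordinates so that the hidden configuration $y=(y_1,\ldots,y_m)$ in which every coordinate is in its ``off'' state $0$ carries essentially all of the mixture weight $\lambda_0$ and produces the product distribution $p_0$; this is arranged through the bias $\theta^L$ on the hidden units and the bias $\theta^{L-1}$ together with $\Theta^{L-1}$. Next, for each $i\in[m]$ I would use the single-coordinate ``activations'' $y$ with $y_i\in\Ycal_i\setminus\{0\}$ and all other coordinates off: there are $|\Ycal_i|-1$ such configurations per hidden unit, and by choosing the row block $\Theta^{L-1}$ coupling $Y_i$ to the visible units I can make each of these configurations emit an arbitrary product distribution, so that their weighted sum reproduces $p_i$. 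The support-disjointness hypothesis $\supp(p_i)\cap\supp(p_j)=\emptyset$ is what lets me treat the $p_i$ independently: I can drive the interaction weights to $\pm\infty$ along appropriate directions so that activating $Y_i$ concentrates the conditional product distribution on $\supp(p_i)$, and the cross terms (configurations with two or more hidden units simultaneously on) can be pushed to negligible total mass by a suitable limit in the bias scale. This is precisely the kind of limiting argument used for hard-sharing of probability in the cited RBM analyses, and the disjoint supports guarantee there is no destructive interference between the blocks.

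The main obstacle is controlling the configurations in which two or more hidden units are simultaneously on: these contribute spurious product distributions to the marginal and must be suppressed in the limit without simultaneously destroying the carefully tuned single-activation contributions. I would handle this by introducing a scale parameter $\beta\to\infty$ in the biases so that the probability of any hidden configuration with $k\geq 2$ active coordinates is of lower order (e.g. $O(e^{-c\beta})$ relative to the single-activation configurations), while rescaling the active biases so that the one-hot activation weights converge to the prescribed $\lambda_i$ and the associated conditional product distributions converge to the desired factors of $p_i$. The disjointness of supports again helps: even the residual mass leaking onto the wrong support can be made to vanish, so the limit lies in the closure of the model, which is all that ``approximate arbitrarily well'' requires. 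I would also invoke the discrete-unit sufficient-statistics representation $\aXu,\aYu$ from the preliminaries to verify that the product distributions achievable as $P(\cdot\mid y)$ range over \emph{all} product distributions on $\Xcal$, so that there is no loss of generality in the factors I need to hit; this reduces the whole argument to the combinatorial bookkeeping of mixture weights plus the $\beta\to\infty$ limit, with the reference to~\citep{montufar2013discrete} supplying the analogous binary/discrete constructions.
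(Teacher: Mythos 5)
Your high-level plan --- reading the RBM marginal as a mixture over hidden configurations, putting $p_0$ on the all-off configuration and the $|\Ycal_i|-1$ components of $p_i$ on the single activations of hidden unit $i$ --- is viable, and it is essentially the parametric counterpart of the paper's proof. However, the mechanism you give for the step you yourself identify as the main obstacle is wrong: multi-activation configurations \emph{cannot} be suppressed ``by a suitable limit in the bias scale.'' Write $W(y)$ for the unnormalized marginal weight of a hidden configuration $y$ under eq.~\eqref{eq:rbmeqjoint}, write $\mathbf{0}$ for the all-off configuration with $P(\cdot\mid\mathbf{0})=p_0$ (take $p_0$ strictly positive), write $y^{(ia)}$ for the configuration with $y_i=a\neq 0$ and all other hidden units off, with conditional $g_{ia}:=P(\cdot\mid y^{(ia)})$, and write $y^{(ia,jb)}$ for the configuration with both $y_i=a$ and $y_j=b$ active, $i\neq j$. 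Since the energy in eq.~\eqref{eq:rbmeqjoint} is linear in $\aYu$, a direct computation gives
\begin{equation*}
\frac{W\big(y^{(ia,jb)}\big)\,W(\mathbf{0})}{W\big(y^{(ia)}\big)\,W\big(y^{(jb)}\big)}
\;=\;\sum_{x\in\Xcal}\frac{g_{ia}(x)\,g_{jb}(x)}{p_0(x)},
\end{equation*}
because the hidden biases $\theta^L$ contribute the same factor $e^{\theta^L_{i,a}+\theta^L_{j,b}}$ to numerator and denominator and cancel. So once the weights of $\mathbf{0}$ and of the single activations are pinned to their targets $\lambda_0>0$ and $\lambda_i\mu_{ia}$ (as they must be, to produce the prescribed mixture weights), the weight of every double activation is completely determined by the conditional distributions, and no choice or rescaling of $\theta^L$ can shrink it. Your proposed $\beta\to\infty$ bias limit therefore does nothing to the quantity you need to control; as written, this step fails.

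What saves your construction is that no suppression mechanism is needed at all: the disjointness hypothesis does the job by itself. In the limit of interaction weights that pushes $g_{ia}$ onto $\supp(p_i)$ and $g_{jb}$ onto $\supp(p_j)$ --- a limit you take anyway to realize the components of the $p_i$ --- the overlap sum $\sum_x g_{ia}(x)g_{jb}(x)/p_0(x)$ tends to zero, so all configurations with two or more active units lose their mass automatically (the same identity, iterated, handles triples and higher). This is precisely the content of the paper's argument, stated multiplicatively instead of algebraically: the paper writes the RBM model, up to normalization, as the Hadamard product $\R_+\Ecal_\Xcal\circ(\mathds{1}+\R_+\Mcal^{|\Ycal_1|-1}_\Xcal)\circ\cdots\circ(\mathds{1}+\R_+\Mcal^{|\Ycal_m|-1}_\Xcal)$, expands $(\mathds{1}+\lambda_1'p_1')\circ\cdots\circ(\mathds{1}+\lambda_m'p_m')$, and observes that every cross term contains a product $p_i'p_j'$ that vanishes identically because the supports are disjoint; those cross terms are exactly your multi-activation configurations. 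Replace your bias-scaling argument by this overlap computation (or simply by the paper's product-of-mixtures identity), and the rest of your outline --- realizing arbitrary product conditionals via the columns of $\Theta^{L-1}$, tuning the single-activation weights via $\theta^L$, and passing to the closure --- goes through.
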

Here, a {\em product distribution} $p$ is a probability distribution on $\Xcal=\Xcal_1 \times\cdots\times\Xcal_n$ that factorizes as  $p(x_1,\ldots,x_n)=\prod_{j\in[n]} p_{j}(x_j)$ for all $x\in\Xcal$, where $p_{j}$ is a distribution on $\Xcal_j$ for all $j\in[n]$. 
A {\em mixture} is a weighted sum  with non-negative weights adding to one. The support of a distribution $p$ is $\supp(p):=\{x\in\Xcal\colon p(x)>0\}$. 

\begin{proof}[Proof of Theorem~\ref{theorem:RBMs}]
Let $\Ecal_\Xcal$ denote the set of strictly positive product distributions of $X_1,\ldots,X_n$. 
Let $\Mcal^k_\Xcal$ denote the set of all mixtures of $k$ product distributions from $\Ecal_\Xcal$. 
The closure $\overline{\Mcal^k_\Xcal}$ contains all mixtures of $k$ product distributions, including those which are not strictly positive. Let $q\circ q'$ denote the renormalized entry-wise product with $(q\circ q')(x)=q(x)q'(x)/\sum_{x'\in\Xcal}q(x')q'(x')$ for all $x\in\Xcal$. 
Let $\mathds{1}$ denote the constant function on $\Xcal$ with value $1$. The model $\RBM_{\Xcal,\Ycal}$ can be written, up to normalization, as the set  
\begin{multline}
\Mcal_\Xcal^{|\Ycal_1|}\circ\cdots \circ \Mcal_\Xcal^{|\Ycal_m|}=
\R_+\Ecal_\Xcal\circ(\mathds{1} + \R_+ \Mcal^{|\Ycal_1|-1}_\Xcal)\circ\cdots\circ(\mathds{1} + \R_+\Mcal^{|\Ycal_m|-1}_\Xcal).   
\end{multline}
Now consider any probability distributions $p_0\in {\Ecal_\Xcal}$, $p_1'\in\overline{\Mcal^{|\Ycal_1|-1}_\Xcal}$, \ldots, $p_m'\in\overline{\Mcal^{|\Ycal_m|-1}_\Xcal}$. 
If $\supp(p_i')\cap\supp(p_j')=\emptyset$ for all $1\leq i<j\leq m$, 
then the product $(\mathds{1} + \lambda_1' p_1')\circ\cdots\circ(\mathds{1} + \lambda_m' p_m')$ is equal to $\mathds{1} + \sum_{i\in[m]}\lambda_i' p_i'$, up to normalization. 
Let $\lambda_i'=\lambda_i / \lambda_0\sum_{x} p_i'(x) p_0(x)$ and $p_i'(x) = p_i(x)/ p_0(x)$. 
Then $\lambda_0p_0\circ(\mathds{1}+\sum_{i\in[m]}\lambda_i p_i')=\sum_{i=0}^m\lambda_i p_i = p$. 
Hence the mixture distribution $p$ is contained in the closure of the RBM model. 
\end{proof}

RBMs can approximate certain partition models arbitrarily well: 

\begin{lemma}
\label{lemma:distrRBMs}
Let $\Pcal$ be the partition model with partition blocks  $\{x_1\}\times\cdots\times\{x_k\}\times\Xcal_{k+1}\times\cdots\times\Xcal_n$ for all $(x_1,\ldots,x_k)\in\Xcal_{1}\times\cdots\times\Xcal_k$. If $ 1+\sum_{j\in[m]}(|\Ycal_j|-1) \geq (\prod_{i\in [k]}|\Xcal_i| ) / \max_{j\in[k]}|\Xcal_j|$, then each distribution contained in $\Pcal$ can be approximated arbitrarily well by distributions from $\RBM_{\Xcal,\Ycal}$. 
\end{lemma}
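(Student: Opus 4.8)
The plan is to exhibit every distribution in $\Pcal$ explicitly as a mixture of product distributions with pairwise disjoint supports, using no more product distributions than the stated inequality permits, and then to invoke Theorem~\ref{theorem:RBMs}. First I would record the shape of a generic element of $\Pcal$. Since a distribution in a partition model is constant on each block, any $p\in\Pcal$ factorizes as $p(x)=\pi(x_1,\ldots,x_k)\,u(x_{k+1},\ldots,x_n)$, where $\pi$ is an arbitrary distribution on $\Xcal_1\times\cdots\times\Xcal_k$ and $u$ is the uniform distribution on $\Xcal_{k+1}\times\cdots\times\Xcal_n$. The factor $u$ itself factorizes over coordinates, hence is already a product distribution; the whole difficulty is therefore to represent the arbitrary factor $\pi$ economically.

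The key idea, and the reason for the division by $\max_{j\in[k]}|\Xcal_j|$, is to group the states along the coordinate of largest cardinality. Let $j^\ast\in[k]$ attain $\max_{j\in[k]}|\Xcal_j|$. For each choice $\bar{x}$ of the first $k$ coordinates except the $j^\ast$-th, the corresponding super-block $\{x_1\}\times\cdots\times\Xcal_{j^\ast}\times\cdots\times\{x_k\}\times\Xcal_{k+1}\times\cdots\times\Xcal_n$ is a product set, and the conditional distribution of $p$ on it is a single product distribution: point masses on the fixed coordinates, the renormalized slice of $\pi$ along $\Xcal_{j^\ast}$, and the uniform factors on the remaining coordinates. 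Collecting these, I would write $p=\sum_{\bar{x}} w_{\bar{x}}\,r_{\bar{x}}$, where $r_{\bar{x}}$ is the product distribution supported on the super-block indexed by $\bar{x}$ and $w_{\bar{x}}$ is its total mass under $p$. This is a mixture of at most $\bigl(\prod_{i\in[k]}|\Xcal_i|\bigr)/\max_{j\in[k]}|\Xcal_j|$ product distributions, and distinct super-blocks are disjoint, so the supports $\supp(r_{\bar{x}})$ are pairwise disjoint.

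Finally I would match this mixture to the template of Theorem~\ref{theorem:RBMs}. Assign one $r_{\bar{x}}$ to the free product distribution $p_0$ and distribute the remaining super-blocks among the groups $p_1,\ldots,p_m$, placing at most $|\Ycal_i|-1$ of them into $p_i$; this is possible precisely because the number of super-blocks is at most $1+\sum_{j\in[m]}(|\Ycal_j|-1)$, which is the hypothesis of the lemma. Since all the $r_{\bar{x}}$ have pairwise disjoint supports, in particular the resulting $p_1,\ldots,p_m$ do, so the hypotheses of Theorem~\ref{theorem:RBMs} are met and $p$ lies in the closure of $\RBM_{\Xcal,\Ycal}$. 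The only point requiring a little care is the bookkeeping when the capacity strictly exceeds the number of super-blocks, so that some groups are empty, and when some $w_{\bar{x}}$ vanish; both are handled by simply dropping the unused or zero-weight components, since a mixture of fewer products is still admissible. I expect the genuine content of the argument to be the grouping-along-the-largest-coordinate step, which is what turns the naive count $\prod_{i\in[k]}|\Xcal_i|$ into the sharper bound; everything else is routine verification of weights and supports.
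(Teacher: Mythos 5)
Your proposal is correct and follows essentially the same route as the paper's proof: both decompose $p\in\Pcal$ into the mixture of uniform block distributions and then merge blocks along the coordinate $j^\ast$ of largest cardinality into super-blocks, each of whose (conditional) distributions is a single product distribution, yielding $\bigl(\prod_{i\in[k]}|\Xcal_i|\bigr)/\max_{j\in[k]}|\Xcal_j|$ mutually disjoint product components to feed into Theorem~\ref{theorem:RBMs}. The only difference is that you spell out the assignment of components to $p_0,p_1,\ldots,p_m$ with at most $|\Ycal_i|-1$ per group, which the paper leaves implicit in the final appeal to that theorem.
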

\begin{proof}
Any point in $\Pcal$ is a mixture of the uniform distributions on the partition blocks. These mixture components have disjoint supports, since the partition blocks are disjoint. They are product distributions, since they can be written as $p_{x_1,\ldots,x_k}=\prod_{i\in[k]}\delta_{x_i} \prod_{i\in[n]\setminus[k]}u_i$, where $u_i$ denotes the uniform distribution on $\Xcal_i$. 
For any  $j\in[k]$, any mixture of the form $\sum_{x_j\in\Xcal_j} \lambda_{x_j} p_{x_1,\ldots,x_k}$ is also a product distribution which factorizes as 
\begin{equation}
(\sum_{x_j\in\Xcal_j} \lambda_{x_j}\delta_{x_j}) \prod_{i\in[k]\setminus\{j\}}\delta_{x_i}
 \prod_{i\in[n]\setminus[k]}u_i. \label{eq:mixtprod}
 \end{equation}
Hence  any point in $\Pcal$ is a mixture of $(\prod_{i\in[k]}|\Xcal_i|)/\max_{j\in[k]}|\Xcal_j|$ product distributions of the form given in eq.~\eqref{eq:mixtprod}. 
The claim follows from Theorem~\ref{theorem:RBMs}. 
\end{proof}

Lemma~\ref{lemma:distrRBMs}, together with the divergence formula for partition models given in pg.~\pageref{lem:part-mod-max-KL}, implies: 

\begin{theorem}
\label{theorem:univRBMs} 
If $1+\sum_{j\in[m]}(|\Ycal_j|-1) \geq \big(\prod_{i\in \Lambda}|\Xcal_i|\big) / \max_{i'\in \Lambda}|\Xcal_{i'}|$ for some $\Lambda\subseteq[n]$, then 
\begin{equation*}
D_{\RBM_{\Xcal,\Ycal}}\leq \log\big(\prod_{i\in[n]\setminus \Lambda}|\Xcal_i| \big). 
\end{equation*}
\noindent
In particular, the model $\RBM_{\Xcal,\Ycal}$ is a universal approximator whenever 
\begin{equation*}
1+\sum_{j\in[m]}(|\Ycal_j|-1) \;\geq\; |\Xcal| / \max_{i\in[n]}|\Xcal_i|. 
\end{equation*}
\end{theorem}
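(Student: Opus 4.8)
The plan is to deduce the bound directly from Lemma~\ref{lemma:distrRBMs} and the maximal-divergence formula $D_\Pcal=\log(c)$ for partition models of coarseness $c$ recorded on pg.~\pageref{lem:part-mod-max-KL}. First I would observe that $\RBM_{\Xcal,\Ycal}$ is invariant, up to relabeling, under permutations of the visible units, so that Lemma~\ref{lemma:distrRBMs}, stated for the initial segment $[k]$, applies verbatim to an arbitrary index set $\Lambda\subseteq[n]$: one simply reorders the visible units so that $\Lambda$ occupies the first $|\Lambda|$ positions and matches $[k]$. Under the hypothesis $1+\sum_{j\in[m]}(|\Ycal_j|-1)\geq(\prod_{i\in\Lambda}|\Xcal_i|)/\max_{i'\in\Lambda}|\Xcal_{i'}|$, the lemma then guarantees that $\RBM_{\Xcal,\Ycal}$ approximates arbitrarily well every element of the partition model $\Pcal$ whose blocks fix the coordinates in $\Lambda$ and range freely over the coordinates in $[n]\setminus\Lambda$.

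Next I would identify the coarseness of $\Pcal$. Each block of $\Pcal$ is a product of singletons on the $\Lambda$-coordinates and of the full state spaces on the remaining coordinates, so every block has cardinality $\prod_{i\in[n]\setminus\Lambda}|\Xcal_i|$, and this common value is the coarseness $c$. The divergence formula then gives $D_\Pcal=\log(\prod_{i\in[n]\setminus\Lambda}|\Xcal_i|)$.

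The bound now follows by a short chain of inequalities. For any target $p\in\Delta(\Xcal)$, the containment of $\Pcal$ in the RBM closure gives $D(p\|\RBM_{\Xcal,\Ycal})\le D(p\|\Pcal)\le D_\Pcal=\log(\prod_{i\in[n]\setminus\Lambda}|\Xcal_i|)$; taking the supremum over $p$ yields $D_{\RBM_{\Xcal,\Ycal}}\le\log(\prod_{i\in[n]\setminus\Lambda}|\Xcal_i|)$. The universal-approximation statement is the special case $\Lambda=[n]$: then the hypothesis reads $1+\sum_{j\in[m]}(|\Ycal_j|-1)\ge|\Xcal|/\max_{i\in[n]}|\Xcal_i|$, the product over $[n]\setminus\Lambda=\emptyset$ is the empty product $1$, and the bound collapses to $D_{\RBM_{\Xcal,\Ycal}}\le\log 1=0$.

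The only genuinely delicate point is the first inequality $D(p\|\RBM_{\Xcal,\Ycal})\le D(p\|\Pcal)$, since Lemma~\ref{lemma:distrRBMs} only places $\Pcal$ inside the closure $\overline{\RBM_{\Xcal,\Ycal}}$ rather than in the model itself. I would resolve this using the standard fact that the infimal divergence to a model equals the infimal divergence to its closure (the map $q\mapsto D(p\|q)$ being continuous on strictly positive distributions, which the RBM can realize via $\Ecal_\Xcal$); thus $D(p\|\RBM_{\Xcal,\Ycal})=D(p\|\overline{\RBM_{\Xcal,\Ycal}})$, and since $\Pcal\subseteq\overline{\RBM_{\Xcal,\Ycal}}$ the inequality follows. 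This closure argument, rather than any new computation, is where the care is needed.
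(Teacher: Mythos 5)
Your proposal is correct and follows essentially the same route as the paper, which derives the theorem directly from Lemma~\ref{lemma:distrRBMs} together with the partition-model divergence formula $D_\Pcal=\log(c)$. The two points you elaborate on---relabeling the visible units so that $\Lambda$ plays the role of $[k]$, and passing from the closure $\overline{\RBM_{\Xcal,\Ycal}}$ back to the model via continuity of $q\mapsto D(p\|q)$ on the finite simplex---are exactly the details the paper leaves implicit, and you handle them correctly.
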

When all units are $q$-ary, the RBM with $(q^{n-1} -1)/(q-1)$ hidden units is a universal approximator of distributions on $\{0,1,\ldots,q-1\}^n$. 
Theorem~\ref{theorem:univRBMs} generalizes previous results on binary RBMs~\citep[][Theorem~1]{Montufar2011} and~\citep[][Theorem~5.1]{NIPS2011_0307}, where it is shown that a binary RBM with $2^{n-1}-1$ hidden units is a universal approximator of distributions on $\{0,1\}^n$ and that the maximal approximation error of binary RBMs decreases at least logarithmically in the number of hidden units. 
A previous result by~\citet[Section~2.5]{Freund1992} shows that a binary RBM with $2^n$ hidden units is a universal approximator of distributions on $\{0,1\}^n$. 
See also the work by~\citet[Theorem~2]{LeRoux2008}. 

\section{The Internal Node of a Star}
\label{section:star}

Consider an inwards directed star graph with leaf variables taking states in $\Ycal=\Ycal_1\times\cdots\times\Ycal_m$  and internal node variable taking states in $\Vcal$. 
Denote by $\Scal_{\Vcal,\Ycal}$ the set of conditional distributions on $\Vcal$ given the states $y\in\Ycal$ of the leaf units, defined by this network. 
Each of these distributions can be written as 
\begin{equation}
p(v|y;\Theta) =   \exp( \aVu^\top \,\Theta \left[ \begin{smallmatrix}1\\\aYu\end{smallmatrix}\right] ) / {Z(\Theta \left[ \begin{smallmatrix}1\\\aYu\end{smallmatrix}\right])} , \quad\text{for all $v\in\Vcal$ and $y\in\Ycal$}. \label{eq:starcds} 
\end{equation}
The distributions from eq.~\eqref{eq:starcond} are of this form, with  $\Theta$ corresponding to $[ (\theta^l_i)^\top | \Theta^l_i]$. 

A conditional distribution $p(\cdot|\cdot)$ is naturally identified with the stochastic map defined by the matrix $(p(x|y))_{y,x}$. 
The following lemma describes some stochastic maps that are representable by the model $\Scal_{\Vcal,\Ycal}$, and which we will use to define a probability sharing scheme in Section~\ref{section:sharing}. 

\begin{lemma}
\label{lemma:star}
Let $\Zcal=\{y_1\}\times\cdots\times\{y_{k-1}\}\times\Ycal_k\times\{y_{k+1}\}\times\cdots\times\{y_m\}\subseteq\Ycal$, $k\neq m$. 
Furthermore, let $\Vcal=\Ycal_m$, and let $\{q^z\colon z\in\Zcal\}$ be any distributions on $\Vcal$. Then there is a choice of the parameters $\Theta$ of $\Scal_{\Vcal,\Ycal}$ for which 
\begin{equation*}
p(\cdot|y;\Theta)=\begin{cases} 
q^y , & \text{ if }  y\in\Zcal \\
\delta_{y_m} ,& \text{ otherwise } 
\end{cases}. 
\end{equation*} 
\end{lemma}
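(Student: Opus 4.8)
The plan is to reduce the statement to a single design problem for the natural parameter of the softmax in eq.~\eqref{eq:starcds}. Writing $r=|\Vcal|=|\Ycal_m|$ with states $\{0,1,\ldots,r-1\}$, the conditional $p(\cdot|y;\Theta)$ is the image under the normalized exponential (softmax) of the vector $\eta_y:=\Theta\left[\begin{smallmatrix}1\\\aYu\end{smallmatrix}\right]\in\R^{r-1}$. Because $\aYu$ stacks the one-hot-minus-first encodings $\aYu_j$ of the coordinates $y_j$, this vector is an additive function of the inputs, $\eta_y=\theta_0+\sum_{j=1}^m \Theta_j^{(y_j)}$, where $\Theta_j^{(a)}$ is the column of $\Theta$ selected by state $a$ (with $\Theta_j^{(0)}=0$); conversely, any additive assignment $\eta_y=\sum_{j=1}^m h_j(y_j)$ is realized by a suitable $\Theta$ (set $\Theta_j^{(a)}=h_j(a)-h_j(0)$ and fold $\sum_j h_j(0)$ into the bias). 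Since the softmax is a bijection onto the strictly positive distributions, it suffices to design such an additive $\eta_y$; the point masses and any boundary values of $q^y$ are then obtained as limits of a parameter family $\Theta_\beta$ (equivalently, in the model closure), which is what is needed downstream. For a target state $a$ I will use the direction $C(a)\in\R^{r-1}$ with $C(a)=e_a$ for $a\geq1$ and $C(0)=-\mathds 1$, which has the property $\operatorname{softmax}(sC(a))\to\delta_a$ as $s\to\infty$.

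The construction uses three groups of additive terms on three separated scales $\beta^2\gg\beta\gg1$. On the copy coordinate $m$ I place $h_m(y_m)=\beta^2 C(y_m)$; on the control coordinate $k$ I place $h_k(y_k)=\psi(y_k)-\beta^2 C(z_m)$, where $\psi(y_k)$ is the (bounded, for strictly positive targets) softmax-preimage of the prescribed distribution $q^y$ at the line point $y\in\Zcal$ with $k$-th coordinate $y_k$; and on each context coordinate $j\in\Gamma:=[m]\setminus\{k,m\}$ I place $h_j(y_j)=\beta\,C(z_m)$ when $y_j\neq z_j$ and $0$ otherwise. Combining the copy and control terms, the resulting natural parameter is
\begin{equation*}
\eta_y\;=\;\beta^2\bigl(C(y_m)-C(z_m)\bigr)\;+\;\beta\,C(z_m)\,N(y)\;+\;\psi(y_k),\qquad N(y):=\#\{\,j\in\Gamma: y_j\neq z_j\,\}.
\end{equation*}
Here the hypothesis $k\neq m$ is essential: it guarantees that $y_m=z_m$ is fixed along $\Zcal$, so that the $\beta^2$-terms cancel on the line and the control term alone installs $q^y$.

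Verification is by cases as $\beta\to\infty$. If $y\in\Zcal$ then $y_m=z_m$ and $N(y)=0$, so $\eta_y=\psi(y_k)$ exactly and the output is $q^y$. If $y\notin\Zcal$ there are two scenarios. When $y_m\neq z_m$, the leading $\beta^2\bigl(C(y_m)-C(z_m)\bigr)$ term dominates everything and drives the softmax to $\delta_{y_m}$, irrespective of $N(y)$ and $\psi$. When $y_m=z_m$ but $N(y)\geq1$, the $\beta^2$-term vanishes and the next-order $\beta\,C(z_m)\,N(y)$ term dominates the bounded $\psi(y_k)$, pushing the output to $\delta_{z_m}=\delta_{y_m}$. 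Thus in every off-line case the output converges to $\delta_{y_m}$, as required.

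I expect the main obstacle to be exactly the point that forces this layered design: ``being on the line $\Zcal$'' is a conjunction over the coordinates $j\neq k$, whereas $\eta_y$ is additive (log-linear) in those coordinates, so no single term can detect the conjunction. The resolution is the strict scale separation $\beta^2\gg\beta\gg1$, which orders the three mechanisms so that the copy signal from coordinate $m$ overrides the context-mismatch penalty, which in turn overrides the control term realizing $q^y$. A secondary, routine point is to obtain the exact point masses $\delta_{y_m}$ and the boundary (non-strictly-positive) targets $q^y$ as limits of $\Theta_\beta$, i.e.\ in the closure of $\Scal_{\Vcal,\Ycal}$.
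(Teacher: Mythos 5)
Your proposal is correct and takes essentially the same route as the paper's own proof: both realize the conditional via an additive (log-linear) natural parameter built from three groups of terms at separated scales---exact control terms on the free coordinate $k$, intermediate penalty terms for context mismatches, and dominant ``copy'' terms driven by coordinate $m$---with the point masses $\delta_{y_m}$ obtained only in the limit (closure). The paper's version differs only cosmetically: it reduces without loss of generality to $\Zcal=\Ycal_1\times\{0\}\times\cdots\times\{0\}$, uses parameter vectors $\vartheta_v$ of distributions with unique mode at $v$ in place of your explicit directions $C(a)$, and, because the zero state has zero sufficient statistics, needs no analogue of your compensation term $-\beta^2 C(z_m)$.
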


\begin{proof} 
Let $\Ycal_j=\{0,1,\ldots,r_j-1\}$ for all $j\in[m]$, and $r=|\Vcal|=r_m$. 
The set  of strictly positive probability distributions on $\Vcal$ is an exponential family $\Ecal_\Vcal = \{p(v;\theta)=\exp(\aVu^\top\,\theta ) / Z(\theta) \text{ for all $v\in\Vcal$ }\colon \theta\in\R^{d} \}$ with $d=r-1$. 
For some $v\in\Vcal$ let $\vartheta_v\in\R^d$ be the parameter vector of a distribution which attains a unique maximum at $v$. 
Then for any fixed $\eta\in\R^d$ we have 
\begin{equation}
\lim_{K\to\infty}p(x; \eta+K \vartheta_v)=\delta_v(x)\quad\text{for all $x\in\Vcal$}. 
\end{equation} 
To see this, note that $p(x ; K \vartheta_v) \propto p(x ; \vartheta_v)^K$ and hence $\lim_{K\to\infty} p(x ; K \vartheta_v) = \delta_v$. 
Furthermore, $p (x ;\eta+K \vartheta_v ) \propto  p (x ;\eta  ) p (x ; K \vartheta_v )$. 

Without loss of generality let $\Zcal=\Ycal_1\times\{0\}\times\cdots\times\{0\}$. 
For each $z=(z_1,\ldots,z_m) \in\Zcal$ let $\theta^{z_1}\in\R^d$ be such that $p(v ; \theta^{z_1}) = q^z(v)$ for all $v\in\Vcal$. 
The matrix $\Theta $ can be set as follows: 

\begin{equation}
\Theta  =
\left[\begin{array}{c|c|c|c| c}
\theta^0 & \Theta_1 & \Theta_2 & \cdots &\Theta_m 
\end{array}
\right]; 
\end{equation}
where  $\Theta_j$ contains the columns  corresponding to $\aYu_j$ in eq.~\eqref{eq:starcds} and 
\begin{equation}
\newcommand{\mli}{\,\middle|\,}
\renewcommand\arraystretch{1.3}
\renewcommand\arraycolsep{2pt}
\begin{array}{l l l}
\Theta_1 &=
{\left[  \theta^1 - \theta^0 \mli \cdots \mli \theta^{r_1-1} - \theta^0 \right]}&
{\in\R^{d \times r_1};}\\
\Theta_j&=
{	\left[ K_0 \vartheta_0 \mli K_0 \vartheta_0 \mli \cdots \mli K_0 \vartheta_0 \right]
}&
{\in\R^{d \times r_j}, \;\;\text{for  $j=2,\ldots,m-1$};}\\ 
\Theta_m&=
{\left[ K_1 \vartheta_1 \mli K_2 \vartheta_2 \mli \cdots \mli K_{r-1} \vartheta_{r-1}\right]}&
{\in\R^{d \times r}.}
\end{array}
\end{equation}
The matrix $\Theta$ maps  $\{\left[ \begin{smallmatrix}1\\\aYu\end{smallmatrix}\right] \colon y\in\Zcal\}$ to the parameter vectors $\{\theta^{z_1}\colon z_1\in\Ycal_1\}$ with corresponding  distributions $\{q^z\colon z\in\Zcal\}$. 
When  $K_0,\ldots,K_{r-1}\in\R$ are chosen such that $\|\theta^0\|,\ldots,\|\theta^{r_1-1}\|\ll K_0\ll K_1,\ldots, K_{r-1}$, 
then for each $y\in\Ycal\setminus\Zcal$ the vector $\left[ \begin{smallmatrix}1\\\aYu\end{smallmatrix}\right]$ is mapped to a parameter vector $\Theta \left[\begin{smallmatrix}1\\ \aYu\end{smallmatrix}\right]$ with  $p(\cdot|y;\Theta \left[\begin{smallmatrix}1\\ \aYu\end{smallmatrix}\right])$ arbitrarily close to $\delta_{y_m}$. 
\end{proof}

\begin{remark}
In order to prove Lemma~\ref{lemma:star} for any subset $\Zcal\subseteq\Ycal$ it is sufficient to show that (i) the vectors $\{\aYu \colon y\in\Zcal\}$ are affinely independent, and  (ii)  there is a linear map $\Theta$ mapping $\{\left[ \begin{smallmatrix} 1\\\aYu \end{smallmatrix}\right] \colon y\in\Zcal\}$ into the zero vector and  $\left[ \begin{smallmatrix} 1\\\aYu \end{smallmatrix}\right]$ into the relative interior of the normal cone of $Q_\Vcal:=\operatorname{conv}\{\aVu \colon v\in\Vcal\}$ at the vertex $v=y_m$ for all $y\in\Ycal\setminus\Zcal$. 
\end{remark}

\section{Probability Sharing}
\label{section:sharing}

\subsection*{A single directed layer}

Consider an input layer of units $Y_1,\ldots,Y_m$ with bipartite connections directed towards an output layer of units $X_1,\ldots,X_n$. Denote by $\Lcal_{\Xcal,\Ycal}$ the model of conditional distributions defined by this network. 
Recall the definition from eq.~\eqref{eq:layercondrs}. 
Each conditional distribution $p(\cdot|\cdot;\Theta)\in \Lcal_{\Xcal,\Ycal}$ defines a linear stochastic map $F_{\Theta} \colon  q \mapsto \sum_{y\in\Ycal} p(x|y;\Theta) q(y)$ from the simplex $\Delta(\Ycal)$ of  distributions on $\Ycal$ to the simplex $\Delta(\Xcal)$ of distributions on $\Xcal$. 
Here the parameter $\Theta$ corresponds to the parameters $\Theta^l,\theta^l$ of the conditional distributions from eq.~\eqref{eq:layercondrs} for a given $l$.

For any $y\in\Ycal$ and $j\in[m]$, we denote by $y[j]$  the one-dimensional cylinder set $\{y_1\}\times\cdots\times\{y_{j-1}\}\times\Ycal_j\times\{y_{j+1}\}\times\cdots\times\{y_m\}$. Similarly, for any $\Lambda\subseteq[m]$, we denote by $y[\Lambda]$ the cylinder set consisting of all arrays in $\Ycal$ with fixed values $\{y_i\}_{i\in [m]\setminus\Lambda}$ in the entries $[m]\setminus\Lambda$. 

Applying Lemma~\ref{lemma:star} to each output unit of $\Lcal_{\Xcal,\Ycal}$ shows: 

\begin{theorem}
\label{theorem:sharing}
Consider some $\{y^{(s)}\}_{s\in[k]}\subseteq\Ycal$. 
Let $\{j_s\}_{s\in[k]}$ be a multiset and $\{i_s\}_{s\in[k]}$ a set of indices from $[m]$. 
If the cylinder sets $y^{(s)}[j_s]$ are disjoint and $\Zcal$ is a subset of $\Ycal$ containing them, then the image of $\Delta(\Zcal)$ by the family of stochastic maps $\Lcal_{\Ycal,\Ycal}$ contains $\Delta(\Zcal\cup_{s\in[k]} y^{(s)}[\{j_s,i_s\}])$. 
\end{theorem}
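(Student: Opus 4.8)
The plan is to exhibit, for each target $p\in\Delta(\Zcal\cup_{s\in[k]}y^{(s)}[\{j_s,i_s\}])$, a single stochastic map $F_\Theta\in\Lcal_{\Ycal,\Ycal}$ together with an input distribution $q\in\Delta(\Zcal)$ such that $F_\Theta(q)=p$ (up to the arbitrarily small error inherited from Lemma~\ref{lemma:star}). Since $\Lcal_{\Ycal,\Ycal}$ factorizes over the output units by eq.~\eqref{eq:layercondrs}, I would design each output unit separately. Discarding the trivial indices with $j_s=i_s$, for which $y^{(s)}[\{j_s,i_s\}]=y^{(s)}[j_s]\subseteq\Zcal$ and nothing new is added, I may assume $j_s\neq i_s$ for all $s$, so that Lemma~\ref{lemma:star} applies to the output unit $X_{i_s}$ with the free coordinate $j_s$ playing the role of ``$k$'' and the copied coordinate $i_s$ playing the role of ``$m$''.

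First I would fix the units. For each $s\in[k]$, invoke Lemma~\ref{lemma:star} on $X_{i_s}$ so that it outputs an arbitrary distribution $q^{y'}$ on $\Ycal_{i_s}$ whenever the input $y'$ lies in the cylinder $y^{(s)}[j_s]$, and copies the input coordinate $i_s$ (outputs $\delta_{y'_{i_s}}$) otherwise. Every remaining output unit $X_t$ with $t\notin\{i_s\}_{s\in[k]}$ is set to copy its input coordinate $t$. \emph{Here the disjointness hypothesis enters}: because the sets $y^{(s)}[j_s]$ are pairwise disjoint, every input $y'$ lies in at most one of them, so at most one output unit is in ``redistribute'' mode while all others copy. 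Hence the combined map $F_\Theta$ acts as the identity on inputs outside $\bigcup_{s}y^{(s)}[j_s]$, while a point mass $\delta_{y'}$ with $y'\in y^{(s)}[j_s]$ is sent to a distribution supported on $\{u:u_t=y'_t\text{ for all }t\neq i_s\}$; letting $y'$ range over $y^{(s)}[j_s]$, these supports sweep out precisely $y^{(s)}[\{j_s,i_s\}]$, with the coordinate $j_s$ preserved.

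Then I would establish the containment by constructing preimages. Writing $\Zcal':=\Zcal\cup_{s}y^{(s)}[\{j_s,i_s\}]$, to each state $w$ with $p(w)>0$ I assign a single source $\sigma(w)\in\Zcal$: if $w\in y^{(s)}[\{j_s,i_s\}]$ for some $s$ (fixing one such $s$ if several), let $\sigma(w)$ be the unique point of $y^{(s)}[j_s]$ agreeing with $w$ off coordinate $i_s$; otherwise $w\in\Zcal\setminus\bigcup_s y^{(s)}[j_s]$ and I set $\sigma(w)=w$. Define $q(y')=\sum_{w:\sigma(w)=y'}p(w)$, and at each spreading source $y'\in y^{(s)}[j_s]$ with $q(y')>0$ set the kernel $q^{y'}(b)=q(y')^{-1}\sum_{w:\sigma(w)=y',\,w_{i_s}=b}p(w)$ (with $q^{y'}$ arbitrary when $q(y')=0$). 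One then checks that $q\in\Delta(\Zcal)$, that each $q^{y'}$ is a probability distribution, and that $F_\Theta(q)=p$.

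\emph{The main obstacle} is that the expanded cylinders $y^{(s)}[\{j_s,i_s\}]$ need not be pairwise disjoint even though the sources $y^{(s)}[j_s]$ are, so a priori a state might collect probability from several sources and be overcounted. The single-source routing $\sigma$ resolves this: a spreading source $y'\in y^{(s)}[j_s]$ can deposit mass at an output position $u$ only when $u$ agrees with $y'$ off coordinate $i_s$, and the amount deposited at $u$ equals $\sum p(w)$ over states $w$ routed to $y'$ with $w_{i_s}=u_{i_s}$; any such $w$ agrees with $y'$ off coordinate $i_s$ and with $u$ at coordinate $i_s$, so $w=u$, and this contributes only when $\sigma(u)=y'$. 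Thus every position receives mass from its assigned source alone and $F_\Theta(q)=p$ follows. Finally, since the copying behaviour of Lemma~\ref{lemma:star} is attained only in a limit, this identity holds up to arbitrarily small error, which is the sense in which the image of $\Delta(\Zcal)$ contains $\Delta(\Zcal')$.
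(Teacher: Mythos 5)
Your proposal is correct and takes essentially the same route as the paper: the paper's entire proof of Theorem~\ref{theorem:sharing} is the one-line observation that it follows by applying Lemma~\ref{lemma:star} to each output unit, which is exactly your construction (redistribution at unit $i_s$ triggered by the disjoint source cylinders $y^{(s)}[j_s]$, copying everywhere else). Your single-source routing $\sigma$, the handling of overlapping expanded cylinders, and the remark that the containment holds in the limiting/closure sense are careful elaborations of details the paper leaves implicit.
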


This result describes the image of a set of probability distributions by the collection of stochastic maps defined by a DBN layer for all choices of its parameters. 
In turn, it describes part of the DBN representational power contributed by a layer of units. 

\begin{figure}
\setlength{\unitlength}{1cm}
\centering
\includegraphics{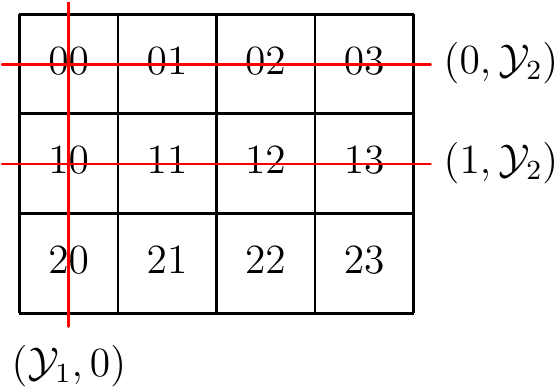}
\caption{Three multi-sharing steps on $\{0,1,2\}\times\{0,1,2,3\}$.}
\label{Figure3}
\end{figure}

\subsection*{A stack of directed layers}

In the case of binary units, sequences of probability sharing steps can be defined conveniently using Gray codes, as done in~\citep[][]{LeRoux2010}. 
A Gray code is an ordered list of vectors, where each two subsequent vectors differ in only one entry. 
A binary Gray code can be viewed as a sequence of one-dimensional cylinder sets. In the non-binary case, this correspondence is no longer given. 
Instead, motivated by Theorem~\ref{theorem:sharing}, we will use one-dimensional cylinder sets in order to define sequences of multi-sharing steps, as shown in Figure~\ref{Figure3}. 

Let $q_i=|\Ycal_i|$ be the cardinality of $\Ycal_i$ for $i\in[n]$, and let $m\leq n$. 
The set $\Zcal=\{0\}\times\cdots\times\{0\}\times\Ycal_{m+1}\times\cdots\times\Ycal_{n} \subseteq\Ycal$ can be written as the disjoint union of $k=\prod_{i=m+2}^{n}|\Ycal_i|$ one-dimensional cylinder sets, as $\Zcal=\dotcup_{s=1}^k y^{(s)}[m+1]$, where $ y^{(s)}=(0,\ldots,\underset{m}{0}\,|\,\underset{m+1}{0},y_{m+2}^{(s)},\ldots,y_{n}^{(s)})$ and $\{(y_{m+2}^{(s)},\ldots,y_{n}^{(s)})\}_{s=1}^k=\Ycal_{m+2}\times\cdots\times\Ycal_{n}$. 

In the following, each  set $y^{(s)}[m+1]$ will be the starting point of a sequence of sharing steps. 
By Theorem~\ref{theorem:sharing}, a directed DBN layer maps the simplex of distributions $\Delta(y^{(1)}[m+1]\cup\cdots\cup y^{(k)}[m+1])$ surjectively to the simplex of distributions $\Delta(y^{(1)}[m+1,1] \cup \cdots\cup y^{(k)}[m+1,k])$. The latter can be mapped by a further DBN layer onto a larger simplex and so forth. 
Starting with $y^{(1)}[m+1]$, consider the sequence 
\begin{equation}
 \renewcommand{\arraystretch}{1.5}
 \renewcommand\arraycolsep{1.5pt}
\begin{array}{l}
(0,0,\ldots,0\,|\,0, y^{(1)}_{m+2},\ldots,{y^{(1)}_{n}})[m+1,1] \\
(0,0,\ldots,0\,|\,0,y^{(1)}_{m+2},\ldots,{y^{(1)}_{n}})[m+1,2] \\
({1},0,\ldots,0\,|\,0,y^{(1)}_{m+2},\ldots,{y^{(1)}_{n}})[m+1,2]
\end{array} 
\end{equation}
continued as shown in Table~\ref{table1}. 
We denote this sequence of cylinder sets by $G^1$, and its $l$-th row (a cylinder set) by $G^1(l)$. 
The union $\cup_{l\in[K]}G^1(l)$ of the first $K$ rows, with $K=1+ q_{1} + q_{1}q_{2}+\cdots+ \prod_{j=1}^{m-1} q_{j}$, is equal to $\Ycal_{1}\times\cdots\times\Ycal_m\times \Ycal_{m+1}\times\{y^{(1)}_{m+2}\}\times\cdots\times\{y^{(1)}_{n}\}$. 

We define $k$ sequences $ G^1,\ldots,G^{k}$ as follows: The first $m$ coordinates of $G^s$ are equal to a permutation of the first $m$ coordinates of $G^1$, defined by shifting each of these $m$ columns cyclically $s$ positions to the right. The last $n-m$ coordinates of $G^s$ are  equal to $(\Ycal_{m+1},y^{(s)}_{m+2},\ldots,y^{(s)}_{n})$. 

We use the abbreviation $\{s+t\}:=(s+t-1)_{\mod(m)}+1$. Within the first $m$ columns, the free coordinate of the $l$-th row of $G^s$ is $s+\kappa$, where $\kappa$ is the least integer with $l\leq \sum_{i=0}^\kappa\prod_{j=s}^{\{s+i-1\}}q_{j}$. Here the empty product is defined as $1$. 
Let $q=\max_{j\in\{1,\ldots,m\}} q_j$. We can modify each sequence $G^s$, by repeating rows if necessary, such that the free coordinate of the $l$-th row of the resulting sequence $\tilde G^s$ is $s+\kappa$, where $\kappa$ is the least integer with $l\leq \sum_{t=0}^\kappa q^t$. This $\kappa$ does not depend on $s$. 

The sequences $\tilde G^s$ for $s\in\{1,\ldots, k\}$ are all different from each other in the last $n-m$ coordinates and have a different `sharing' free-coordinate in each row. 
The union of cylinder sets in all rows of these sequences is equal to $\Ycal_1\times\cdots\times\Ycal_n$. 

\begin{table}
\begin{equation*}
\begin{array}{l}
\renewcommand{\arraystretch}{.8}
\renewcommand\arraycolsep{1.5pt}
{\begin{array}{|c | c| c| c | c| c}
\toprule[.4mm]
\cellcolor[gray]{.9}\Ycal_{1} & 0 & 0 &\;\; 0 \;\;{}&\;\; 0 \;\;{}&\,\phantom{\cdots}\\
\midrule[0.4mm]
\cellcolor[gray]{.7}0 & \cellcolor[gray]{.9}\Ycal_{2}& 0 & 0 & 0 & \\ 
\cellcolor[gray]{.7}1 & \cellcolor[gray]{.9}\Ycal_{2}& 0 & 0 & 0 & \\ 
\cellcolor[gray]{.7}\vdots & \cellcolor[gray]{.9}\vdots & & & & \\ 
\cellcolor[gray]{.7}q_{1}-1 & \cellcolor[gray]{.9}\Ycal_{2}& 0 & 0 & 0 & \\
\midrule[0.4mm]
0 &\cellcolor[gray]{.7} 0 & \cellcolor[gray]{.9}\Ycal_{3} & 0&0& \\
0 &\cellcolor[gray]{.7} 1 & \cellcolor[gray]{.9}\Ycal_{3} & 0&0& \\
\vdots &\cellcolor[gray]{.7}\vdots& \cellcolor[gray]{.9}\vdots & & \\ 
0 &\cellcolor[gray]{.7} q_{2}-1& \cellcolor[gray]{.9}\Ycal_{3} & 0&0& \\
\midrule
1 &\cellcolor[gray]{.7} 0 & \cellcolor[gray]{.9}\Ycal_{3} & 0&0& \\ 
1 &\cellcolor[gray]{.7} 1 & \cellcolor[gray]{.9}\Ycal_{3} & 0&0& \\
\vdots &\cellcolor[gray]{.7}\vdots& \cellcolor[gray]{.9}\vdots & & & \\ 
1 &\cellcolor[gray]{.7} q_{2}-1& \cellcolor[gray]{.9}\Ycal_{3} & 0 & 0 & \\
\midrule
  & & \phantom{q_{3}-1} &\phantom{\Ycal_{4}} &
\end{array}}\\
\phantom{q_{1}-1}\, \vdots \phantom{q_{2}-1 }\vdots\phantom{q_4-1}\\
\renewcommand{\arraystretch}{0.8}
\renewcommand\arraycolsep{1.5pt}
{
\begin{array}{|c|c|c|c|c|c}
  & & & \phantom{\;\; 0 \;\;{}} & \\
\midrule
q_{1}-1 &\cellcolor[gray]{.7} 0 & \cellcolor[gray]{.9}\Ycal_{3} & 0 & 0 & \phantom{\,\cdots}\\ 
q_{1}-1 &\cellcolor[gray]{.7} 1 & \cellcolor[gray]{.9}\Ycal_{3} & 0 & 0 & \\ 
\vdots  &\cellcolor[gray]{.7}\vdots & \cellcolor[gray]{.9}\vdots & & & \\ 
q_{1}-1 &\cellcolor[gray]{.7}q_{2}-1 & \cellcolor[gray]{.9}\Ycal_{3} & 0 & 0 & \\
\midrule[0.4mm]
0 & 0 & \cellcolor[gray]{.7} 0 & \cellcolor[gray]{.9}\Ycal_{4} & 0 &\\ 
0 & 0 & \cellcolor[gray]{.7} 1 & \cellcolor[gray]{.9}\Ycal_{4} & 0 & \\ 
\vdots & \vdots & \cellcolor[gray]{.7}\vdots & \cellcolor[gray]{.9}\vdots & & \\ 
0 & 0 & \cellcolor[gray]{.7}q_{3}-1 & \cellcolor[gray]{.9}\Ycal_{4} &\;\; 0 \;\;{} &\\
\midrule
& & & & &\\
\end{array}} \\
{}\\
\phantom{q_{1}-1}\, \vdots \phantom{q_{2}-1 }\vdots\phantom{q_{4}-1}\\
{}\\
\renewcommand{\arraystretch}{0.8}
\renewcommand\arraycolsep{1.5pt}
{
\begin{array}{|c|c| c| c |c |c|}
  & & \phantom{q_{3}-1} & & & \phantom{\;\; 0 \;\;{}}\\
\midrule
q_{1}-1 & q_{2}-1 & \cdots & q_{m-2}-1 &\cellcolor[gray]{.7}0 & \cellcolor[gray]{.9}\Ycal_{m}\\ 
q_{1}-1 & q_{2}-1 & \cdots & q_{m-2}-1 &\cellcolor[gray]{.7}1 & \cellcolor[gray]{.9}\Ycal_{m}\\ 
\vdots  &  \vdots & & \vdots &\cellcolor[gray]{.7}\vdots & \cellcolor[gray]{.9} \vdots \\ 
q_{1}-1 & q_{2}-1 & \cdots & q_{m-2}-1 &\cellcolor[gray]{.7}q_{m-1}-1 & \cellcolor[gray]{.9}\Ycal_{m}\\
\bottomrule[.4mm]
\end{array}
}
\end{array}
\end{equation*}
\caption{Sequence of one-dimensional cylinder sets.}\label{table1}
\end{table}

\section{Deep Belief Networks}
\label{section:theDBN}

\begin{proposition}
\label{proposition:distrDBN}
Consider a DBN with $L\geq 2$ layers of width $n$, each layer containing units with state spaces of cardinalities $q_1,\ldots,q_n$. 
Let $m$ be any integer with $n\geq m \geq \prod_{j=m+2}^{n}q_j=:k$. 
The corresponding probability model can approximate a distribution $p$ on $\{0,1,\ldots, q_1-1 \}\times\cdots\times\{0,1, \ldots, q_n-1\}$ arbitrarily well whenever the support of $p$ is contained in $\cup_{s\in[k]} \cup_{l\in[L-2]} \tilde G^{s}(l)$. 
\end{proposition}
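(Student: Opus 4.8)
The plan is to read off from the DBN generative law \eqref{eq:jointeq} that the visible distribution $P(\cdot;\Theta)$ is the image of the top-RBM marginal on $\Xcal^{L-1}$ under the composition of the $L-2$ linear stochastic maps determined by the directed conditionals $p_{L-2},\ldots,p_1$ of \eqref{eq:layercondrs}, each of which ranges over the family $\Lcal_{\Ycal,\Ycal}$ as its parameters vary. I would then prove the proposition by an induction over these $L-2$ maps that successively enlarges the set of distributions reachable at each layer. Writing $\Zcal=\dotcup_{s\in[k]} y^{(s)}[m+1]$ and $W_t:=\Zcal\cup\bigcup_{s\in[k]}\bigcup_{l\in[t]}\tilde G^{s}(l)$, the inductive claim is that for every $t\in\{0,\ldots,L-2\}$ the distribution carried by layer $L-1-t$ can be driven arbitrarily close to any prescribed distribution supported on $W_t$. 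Instantiating $t=L-2$ gives approximability of every distribution on $W_{L-2}$, which contains $\supp(p)$, and the proposition follows.

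For the base case $t=0$ I would apply Theorem~\ref{theorem:RBMs}. Since $W_0=\Zcal$ is a disjoint union of the $k\le m$ one-dimensional cylinder sets $y^{(s)}[m+1]$, any distribution on $\Zcal$ is a mixture with pairwise disjoint supports of $k$ product distributions — on each $y^{(s)}[m+1]$ one takes a point mass on the frozen coordinates times an arbitrary law on coordinate $m+1$ — and since the top RBM has width $n\ge m\ge k$ there are enough mixture components, so Theorem~\ref{theorem:RBMs} shows it approximates such a distribution arbitrarily well. For the inductive step $t\mapsto t+1$ I would invoke Theorem~\ref{theorem:sharing} for the directed layer mapping layer $L-1-t$ to layer $L-2-t$: simultaneously for every $s\in[k]$, take the one-dimensional cylinder face (free coordinate $m+1$) along which the next row $\tilde G^{s}(t+1)$ attaches — by construction this face already lies in $W_t$ — and open the coordinate $i_s=\{s+\kappa(t+1)\}$ prescribed by the synchronized sequence $\tilde G^{s}$, so that $y^{(s)}[\{m+1,i_s\}]=\tilde G^{s}(t+1)$. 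As $k\le m$ the indices $i_s$ are distinct (no cyclic collision) and form a set, and the faces lie in disjoint slabs, so the hypotheses of Theorem~\ref{theorem:sharing} hold and the image of $\Delta(W_t)$ contains $\Delta\big(W_t\cup\bigcup_{s} \tilde G^{s}(t+1)\big)=\Delta(W_{t+1})$. The approximation quality carries through because the maps are continuous and the reachable set is the full simplex $\Delta(W_{t+1})$.

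The main obstacle is the combinatorial bookkeeping required to license Theorem~\ref{theorem:sharing} at \emph{every} step, i.e.\ the verification that the elimination order encoded in Table~\ref{table1} is globally consistent. Two points are central. First, each newly adjoined row $\tilde G^{s}(l)$ must attach along a one-dimensional cylinder face (free coordinate $m+1$) that \emph{already} belongs to $W_{l-1}$, so that the face is present and can be opened; this is precisely the property for which the sequences $G^{s}$ are laid out as in Table~\ref{table1}, and it is why they are synchronized to a common index $\kappa(l)$ — padding with repeated rows and using $q=\max_{j} q_j$ — so that all $k$ sequences advance in lockstep and are handled by a single layer. Second, across sequences the $k$ faces opened in one layer must be pairwise disjoint; this holds because distinct $s$ differ in the frozen coordinates $m+2,\ldots,n$, so the sequences occupy disjoint slabs of $\Ycal$. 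The remaining verifications — that padding (repeated) rows correspond to sharing steps that merely re-open an already-free coordinate and hence act as the identity on the current support, and that $k\le m$ provides enough free coordinates in $[m]$ — are routine.
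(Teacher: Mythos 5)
Your proof is correct and takes essentially the same route as the paper's: the paper's (much terser) proof likewise uses Theorem~\ref{theorem:RBMs} to let the top RBM approximate every distribution on $\Zcal=\{0\}\times\cdots\times\{0\}\times\Xcal_{m+1}\times\cdots\times\Xcal_n$ arbitrarily well, and then invokes Theorem~\ref{theorem:sharing} to map this simplex iteratively into larger simplices along the sequences $\tilde G^s$. Your additional bookkeeping --- the induction on $W_t$, disjointness of the slabs indexed by $s$, distinctness of the cyclically shifted sharing coordinates when $k\leq m$, and the harmlessness of the padded (repeated) rows --- is precisely what the paper leaves implicit.
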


\begin{proof}
Note that $\prod_{j=m+2}^{n}q_j\leq n \leq 1 + \sum_{j\in[n]}(q_j-1)$. 
By Theorem~\ref{theorem:RBMs} the top RBM can approximate each distribution in the probability simplex on $\{0\}\times\cdots\times\{0\}\times \Xcal_{m+1}\times\cdots\times\Xcal_n$ arbitrarily well. 
By Theorem~\ref{theorem:sharing}, this simplex can be mapped iteratively into larger simplices, according to the sequences $\tilde G^s$ from Section~\ref{section:sharing}. 
\end{proof}

\begin{theorem}
\label{theorem:submDBNs} 
Consider a DBN with $L$ layers of width $n$, each layer containing units with state spaces of cardinalities $q_1,\ldots, q_n$. 
Let $m$ be any integer with $n\geq m \geq \prod_{j=m+2}^{n}q_j$ and  $q= q_1\geq\cdots\geq q_m$. 
If $L\geq 2 + 1 + q+\cdots+q^{S-1} = 2 +\frac{q^{S}-1}{q-1}$, then the DBN model can approximate each distribution in a partition model $\Pcal$ of coarseness $\prod_{j=1}^{m-S}q_{j}$ arbitrarily well. 
\end{theorem}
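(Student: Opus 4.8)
The plan is to locate a partition model $\Pcal$ of $\Ycal_1\times\cdots\times\Ycal_n$ of coarseness $\prod_{j=1}^{m-S}q_j$ inside $\overline{\DBN}$, and then conclude with the partition-model divergence identity recalled on pg.~\pageref{lem:part-mod-max-KL}: from $\Pcal\subseteq\overline{\DBN}$ one gets $D_{\DBN}\le D_{\Pcal}=\log(\prod_{j=1}^{m-S}q_j)$, while the phrase ``approximates each element of $\Pcal$'' is precisely $\Pcal\subseteq\overline{\DBN}$. I would read the partition $\varrho$ off the sharing sequences $\tilde G^1,\ldots,\tilde G^k$ of Section~\ref{section:sharing}. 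Running each sequence for the $L-2\ge\frac{q^S-1}{q-1}$ available rows opens exactly $S$ of the first $m$ coordinates (those indexed $s,s+1,\ldots,s+S-1$ cyclically), so I define the blocks of $\varrho$ to be, inside each slice $\{y:(y_{m+2},\ldots,y_n)=y^{(s)}\}$, the cylinder sets that fix these $S$ opened coordinates together with coordinate $m+1$, and let the remaining $m-S$ first-layer coordinates range freely. Counting then shows the transversal $U:=\bigcup_{s\in[k]}\bigcup_{l\in[L-2]}\tilde G^s(l)$ of Proposition~\ref{proposition:distrDBN} meets each block in exactly one point; and since $q_1\ge\cdots\ge q_m$, the largest block is the one whose free window is $\{1,\ldots,m-S\}$, so the coarseness is $\prod_{j=1}^{m-S}q_j$.

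To place a general $p=\sum_B\lambda_B u_B\in\Pcal$ in $\overline{\DBN}$, I would start from Proposition~\ref{proposition:distrDBN}: the top RBM together with the $\frac{q^S-1}{q-1}$ directed layers already realizes every distribution supported on $U$, in particular the distribution $\sum_B\lambda_B\delta_{r_B}$ putting the prescribed weight on the unique representative $r_B\in U$ of each block $B$. Here the top RBM supplies an arbitrary distribution on $\Zcal=\{0\}^m\times\Ycal_{m+1}\times\cdots\times\Ycal_n$ by Theorem~\ref{theorem:RBMs}, and the directed layers superpose the multi-sharing steps of Theorem~\ref{theorem:sharing} along the $\tilde G^s$, the cyclic shift being exactly what guarantees the distinctness of the freshly opened coordinate that Theorem~\ref{theorem:sharing} requires.

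The one ingredient not delivered by Proposition~\ref{proposition:distrDBN} — which yields only the distributions supported on $U$, a face on which the worst-case divergence is infinite — is to spread each representative mass uniformly over its block, and this is where I expect the real work to lie. Since a directed layer's conditional factorizes over its output units (eq.~\eqref{eq:layercondrs}), and since the $m-S$ coordinates that are block-internal in a given slice are never used as sharing coordinates there, their visible-layer conditionals are left unconstrained by the sharing construction; I would set each such unit to emit the uniform distribution on its state space. By Lemma~\ref{lemma:star} a single parameter choice can implement this input-dependent rule — a coordinate must emit uniformly in the slices where it is block-internal and carry out its sharing role in the slices where it is block-defining. The composite stochastic map then sends $\delta_{r_B}\mapsto u_B$, hence $\sum_B\lambda_B\delta_{r_B}\mapsto\sum_B\lambda_B u_B=p$, and surjectivity onto $\Delta(U)$ supplies every element of $\Pcal$.

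Thus the crux is to upgrade the support statement of Theorem~\ref{theorem:sharing} and Proposition~\ref{proposition:distrDBN} into a statement that the whole partition model $\Pcal$ — uniform within blocks, and therefore of full support — lies in $\overline{\DBN}$, all while respecting the cyclically shifted, slice-dependent bookkeeping of which coordinates are opened by sharing and which are emitted uniformly. With that in place, the divergence bound follows from the partition-model identity, and the universal-approximation case is recovered by taking $S=m$, where the free window is empty, the coarseness equals $1$, and $L\ge 2+\frac{q^m-1}{q-1}$.
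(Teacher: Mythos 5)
Your first two paragraphs are sound (the partition you describe is the paper's, the transversal claim is correct, and the cyclic shift does serve exactly the purpose you state), but the third paragraph — which you correctly identify as ``where the real work lies'' — contains a genuine gap. The problem is that the conditional $p_{l,j}(\cdot\,|\,y)$ of an output unit $j$ is a \emph{single} function of the whole input $y$, shared across all slices; it is not a per-slice object that the sharing construction ``leaves unconstrained'' on the slices where $j$ is block-internal. Under the cyclic-shift bookkeeping a given first-layer unit $j$ is block-defining for some slices and block-internal for others (already for $n=4$, $m=2$, $S=1$, $k=2$: unit $1$ must copy in slice $1$ and emit uniformly in slice $2$), so one parameter matrix must implement both behaviors. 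Lemma~\ref{lemma:star} does not deliver this: it only yields ``emit prescribed distributions on a one-dimensional cylinder set, copy otherwise.'' Concretely, in the model $\Scal_{\Vcal,\Ycal}$ of eq.~\eqref{eq:starcds} the natural parameter $\Theta\left[\begin{smallmatrix}1\\\aYu\end{smallmatrix}\right]=\theta_0+\sum_i\Theta_i\aYu_i$ is additive in the one-hot blocks, so the term $\Theta_j\aYu_j$ carrying the dependence on the unit's own input is the same in every slice. Copying in a slice where $j$ is block-defining forces the total parameter at $y_j=0$ to be a large multiple of $\vartheta_0$ there, while uniformity in a slice where $j$ is block-internal forces it to be near $0$ (note $y_j=0$ on the support there too); subtracting, the slice contribution $\sum_{i\geq m+2}\Theta_i\aYu_i$, an \emph{additive} function on $\Ycal_{m+2}\times\cdots\times\Ycal_n$, would have to be $\approx K\vartheta_0$ exactly on the slices where $j$ is block-defining and $\approx 0$ on the rest. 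Additive functions cannot realize arbitrary dichotomies (no additive function on $\{0,1\}^2$ is a fixed nonzero value on $\{(0,0),(1,1)\}$ and zero on the complement), so your step needs a proof that the enumeration $s\mapsto y^{(s)}$ can be chosen so that all $m$ block-defining/block-internal patterns are simultaneously additively separable — a combinatorial claim you never establish. Nor can you dodge this with a dedicated uniformization layer: at $L=2+\frac{q^S-1}{q-1}$ all directed layers are consumed by sharing steps, and a dedicated layer would face the identical slice-dependent copy-versus-uniform switch.

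The paper's proof of Theorem~\ref{theorem:submDBNs} avoids the issue by never creating point masses that must later be spread out. By Theorem~\ref{theorem:RBMs} the top RBM already realizes arbitrary mixtures of the uniform distributions on the \emph{fat} starting blocks (these are product distributions with pairwise disjoint supports, at most $k\leq m\leq n$ of which are needed), and each directed layer then shares fat blocks directly: the sharing unit's detection ignores the block-internal coordinates (their columns in $\Theta$ are set to zero, a harmless extension of Lemma~\ref{lemma:star}), its emission still depends only on the single coordinate $y_{m+1}$, and every other unit copies — and copying \emph{preserves} uniformity on the block-internal coordinates. So no unit ever has to switch between copying and uniformizing across slices. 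If you want to rescue your transversal picture, the repair is exactly this: move the uniformization up into the RBM stage and make every layer's detection ignore the block-internal coordinates, which is the paper's argument.
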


\begin{proof}
When $L\leq 2$ the result follows from Lemma~\ref{lemma:distrRBMs}. 
Assume therefore that $L\geq 2 + 1 + q+q^2+\cdots+q^r$, $r\geq 0$. 
We use the abbreviation $\{s+t\}:=(s+t-1)_{\mod(m)}+1$. 
Let $k=\prod_{j=m+2}^n q_j$ and $\{  (y_{m+2}^{(s)},\ldots,y_n^{(s)})\colon s\in[k] \}=\Ycal_{m+2}\times\cdots\times\Ycal_n$. 
The top RBM can approximate each distribution from a partition model $\Pcal$ (on a subset of $\Ycal$) arbitrarily well, whose partition blocks are the cylinder sets with fixed coordinate values 
$$y_{ s}=0, y_{ \{s+1\}}=0,\ldots, y_{ \{s+r\}}=0, y_{m+1}, y^{(s)}_{m+2},\ldots,y^{(s)}_{n};$$ 
for all $y_{m+1}\in\Ycal_{m+1}$, for all $s\in[k]$. 
After $L-2$ probability sharing steps starting from $\Pcal$, the DBN can approximate the distributions from the partition model arbitrarily well, whose partition blocks are the cylinder sets with fixed coordinate values 
$$y_{ s}, y_{ \{s+1\}},\ldots, y_{\{s+r\}}, y_{m+1}, y^{(s)}_{m+2}, \ldots, y^{(s)}_{n};$$
for all possible choices of $y_{ s}, y_{ \{s+1\}}, \ldots, y_{ \{s+r\}}, y_{m+1}$, for all $s\in[k]$. 
The maximal cardinality of such a block is $q_{1}\cdots q_{m-r-1}$, and the union of all blocks equals $\Ycal$. 
\end{proof}

\begin{proof}[Proof of Theorem~\ref{theorem:main}]
The claim follows bounding the divergence of the partition models described in Theorem~\ref{theorem:submDBNs}. 
\end{proof}

As a corollary we obtain the following bound for the expectation value of the divergence from distributions drawn from a Dirichlet prior, to the DBN model. 
\begin{corollary}
\label{corollary:expectations}
The expectation value of the divergence from a probability distribution $p$ drawn from the symmetric Dirichlet distribution $\Dir_{(a,\ldots,a)}$ to the model $\DBN$ from Theorem~\ref{theorem:main} is bounded by 
\begin{equation*}
\int_{\Delta} D(p\| \DBN) \,\Dir_{(a,\ldots,a)}(p)\, \rd p \leq \left(\psi(a+1) - \psi( c a + 1) + \ln(c) \right) \log(e),  
\end{equation*}
where $c= \prod_{j\in[m-S]} q_j$, $\psi$ is the digamma function, and $e$ is Euler's constant.  
\end{corollary}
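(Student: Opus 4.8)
The plan is to transfer the problem from $\DBN$ to the partition model supplied by Theorem~\ref{theorem:main} and then evaluate a Dirichlet expectation in closed form. By Theorem~\ref{theorem:main} the closure of $\DBN$ contains a partition model $\Pcal$ of coarseness $c=\prod_{j\in[m-S]}q_j$, so $D(p\|\DBN)\le D(p\|\Pcal)$ for every $p\in\Delta(\Xcal)$, where $\Xcal=\{0,\ldots,q_1-1\}\times\cdots\times\{0,\ldots,q_n-1\}$. Hence it suffices to bound the expectation of $D(p\|\Pcal)$. Writing $\varrho=\{A_1,\ldots,A_N\}$ for the partition and $P_i:=p(A_i)$, a short Lagrange computation shows that the information projection of $p$ onto $\Pcal$ is the distribution that is uniform on each block with block mass $P_i$, which gives the closed form
\[
D(p\|\Pcal)=\sum_{x\in\Xcal}p(x)\log p(x)-\sum_{i\in[N]}P_i\log P_i+\sum_{i\in[N]}P_i\log|A_i|.
\]

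For the expectation $\mathbb{E}$ under $\Dir_{(a,\ldots,a)}$ I would use two standard facts about the symmetric Dirichlet on $d:=|\Xcal|$ categories. First, the expected-entropy identity gives $\mathbb{E}\big[\sum_x p(x)\ln p(x)\big]=\psi(a+1)-\psi(da+1)$. Second, by the aggregation (lumping) property of the Dirichlet the coarse-grained vector $(P_1,\ldots,P_N)$ is distributed as $\Dir_{(|A_1|a,\ldots,|A_N|a)}$; applying the same identity to it yields $\mathbb{E}\big[\sum_i P_i\ln P_i\big]=\sum_i\frac{|A_i|}{d}\psi(|A_i|a+1)-\psi(da+1)$, while $\mathbb{E}[P_i]=|A_i|/d$. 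Substituting these into the closed form (working in natural logarithms), the two $\psi(da+1)$ terms cancel and leave
\[
\mathbb{E}[D(p\|\Pcal)]=\psi(a+1)+\sum_{i\in[N]}\frac{|A_i|}{d}\bigl(\ln|A_i|-\psi(|A_i|a+1)\bigr).
\]

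It remains to bound the weighted average on the right by its value at the largest block size $c=\max_i|A_i|$; this is necessary precisely because the blocks in Theorem~\ref{theorem:submDBNs} have differing sizes. The weights $|A_i|/d$ are nonnegative and sum to one, so it suffices to show that $g(t):=\ln t-\psi(ta+1)$ is nondecreasing for $t\ge1$; then $g(|A_i|)\le g(c)$ termwise and the average is at most $g(c)=\ln c-\psi(ca+1)$, giving $\mathbb{E}[D(p\|\Pcal)]\le\psi(a+1)-\psi(ca+1)+\ln c$. This monotonicity is the main analytic point: differentiating gives $g'(t)=\tfrac1t-a\,\psi'(ta+1)$, so I must verify the trigamma bound $\psi'(s+1)\le 1/s$ for $s=ta>0$, which follows by comparing the series $\psi'(s+1)=\sum_{k\ge1}(s+k)^{-2}$ with the integral $\int_0^\infty(s+x)^{-2}\,\rd x=1/s$, the summand being positive and decreasing. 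Finally, the divergence in the statement uses logarithms to a fixed base whereas the Dirichlet identities above are in nats, so multiplying through by $\log(e)$ converts the bound to $\bigl(\psi(a+1)-\psi(ca+1)+\ln c\bigr)\log(e)$, as claimed; integrating the pointwise inequality $D(p\|\DBN)\le D(p\|\Pcal)$ is legitimate since both sides are bounded by the constant $\log c$.
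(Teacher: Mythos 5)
Your proposal is correct, and at the top level it follows the same route as the paper: reduce the problem to the partition model $\Pcal$ of coarseness $c=\prod_{j\in[m-S]}q_j$ that Theorem~\ref{theorem:main} places in the closure of $\DBN$, and then compute the Dirichlet expectation of $D(p\|\Pcal)$. The difference is that the paper's proof consists of a single citation to Mont\'ufar and Rauh (2012) for the analytic content, whereas you derive it from scratch: the closed form $D(p\|\Pcal)=\sum_x p(x)\log p(x)-\sum_i P_i\log P_i+\sum_i P_i\log|A_i|$ via the information projection, the expected-entropy identity $\mathbb{E}\bigl[\sum_x p(x)\ln p(x)\bigr]=\psi(a+1)-\psi(da+1)$, the Dirichlet aggregation property for the block masses $(P_1,\ldots,P_N)$, and the resulting exact identity $\mathbb{E}[D(p\|\Pcal)]=\psi(a+1)+\sum_i\frac{|A_i|}{d}\bigl(\ln|A_i|-\psi(|A_i|a+1)\bigr)$. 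Your monotonicity step --- showing $g(t)=\ln t-\psi(ta+1)$ is nondecreasing via the trigamma bound $\psi'(s+1)\le 1/s$ --- is a genuine and necessary addition for a self-contained argument, because the partition produced in Theorem~\ref{theorem:submDBNs} has blocks of unequal cardinality when the $q_j$ differ, so the weighted average must be dominated by its value at the largest block; this is precisely the point the paper buries in the external reference. What the paper's approach buys is brevity and an exact (not merely upper-bounded) expectation formula available in the cited work; what yours buys is a verifiable, self-contained proof, including the unit conversion by $\log(e)$ and the justification that the pointwise bound $D(p\|\DBN)\le D(p\|\Pcal)\le\log c$ can be integrated. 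I see no gap in your argument.
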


\begin{proof}
This is a consequence of analytical work~\citep{wupes2012} on the expectation value of Kullback-Leibler divergences of standard probability models, 
applied to the partition models described in Theorem~\ref{theorem:main}. 
\end{proof}

\appendix

\section{Small Experiments}
\label{section:experiments}
We run some computer experiments, 
not with the purpose of validating the quality of our bounds in general, but with the purpose of giving a first empirical insight. 
It is important to emphasize that numerical experiments evaluating the divergence from probability models defined by neural networks are only feasible for small networks, since otherwise the model distributions are too hard to compute~\cite[see, e.g.,][]{Long2010}. 
For large models one still could try to sample the distributions and replace the divergence by a proxy, like the discrepancy of low-level statistics, but here we will focus on small networks. 

We generate artificial data in the following natural way: 
For a given visible state space $\Xcal$ and the corresponding probability simplex $\Delta(\Xcal)$, 
we draw a set of distributions $\{ p^i\in\Delta(\Xcal) \colon i=1,\ldots, T \}$ from the Dirichlet distribution $\Dir_{(a,\ldots,a)}$ on $\Delta(\Xcal)$. 
For the purpose of our experiments, we choose the concentration parameter $a$ in such a way that the Dirichlet density is higher for low-entropy distributions (most distributions in practice have relatively few preferred states and hence a small entropy). 
Next, for each $i=1,\ldots,T$, we generate $N$ i.i.d.~samples from $p^i$, which results in a data vector $X^i = (X^i_1,\ldots, X^i_N) \in\Xcal^N$ with empirical distribution $P^i=\frac{1}{N}\sum_{j=1}^{N} \delta_{X^i_j}$. 

A network $\Ncal$ (with visible states $\Xcal$) is then tested on all data sets $X^i$, $i = 1,\ldots, T$. 
For each data set we train $\Ncal$ using contrastive divergence (CD)~\citep{Hinton2002,Hinton2006} and maximum likelihood (ML) gradient. 
This gives us a maximum likelihood estimate $p_{ \theta_i}$ of $P^i$ within $\Ncal$. 
Finally, we compute the Kullback-Leibler divergence $D(P^i\| p_{\theta_i})$, the maximum value over all data sets $\text{\textsf{max}}_{\text{\textsf{CD+ML}}} = \max_{i=1,\ldots,T} D(P^i\| p_{\theta_i})$, and the mean value over all data sets $\text{\textsf{mean}}_{\text{\textsf{CD+ML}}}=\frac{1}{T}\sum_{i=1}^T D(P^i\| p_{\theta_i})$. 
We do not need cross validation, or $D(p^i\|p_{\theta_i})$, because we are interested in the representational power of $\Ncal$, rather than on its generalization properties. 

\begin{figure}
\includegraphics[width=7cm]{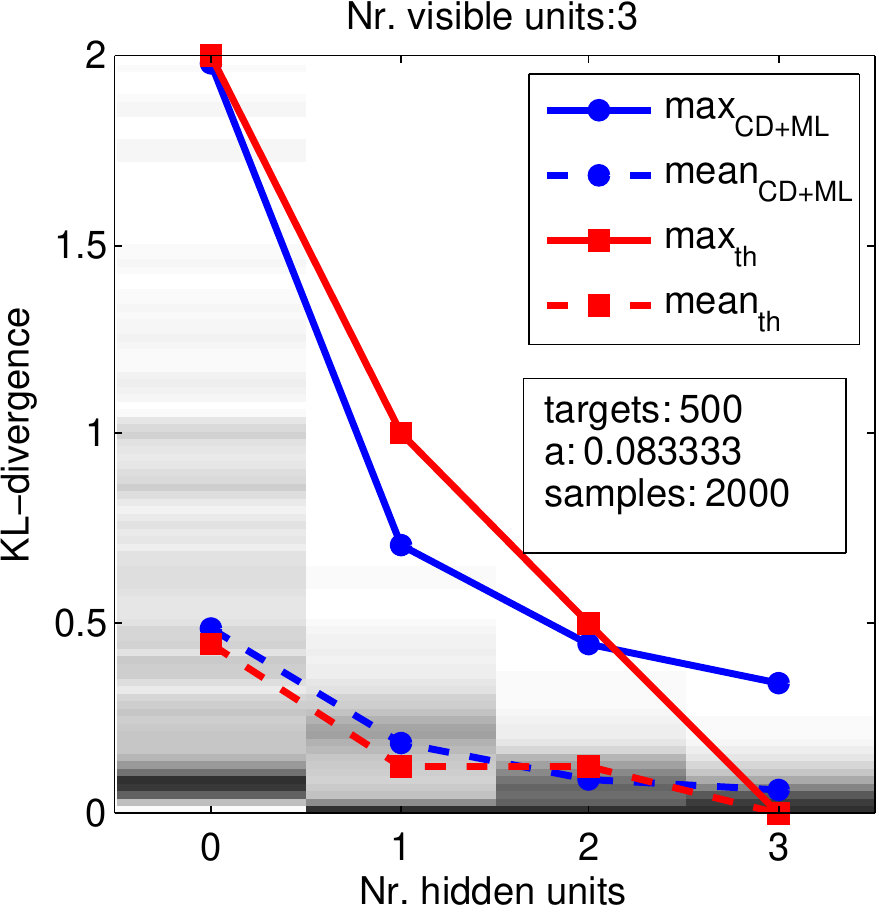}\quad
\includegraphics[width=7cm]{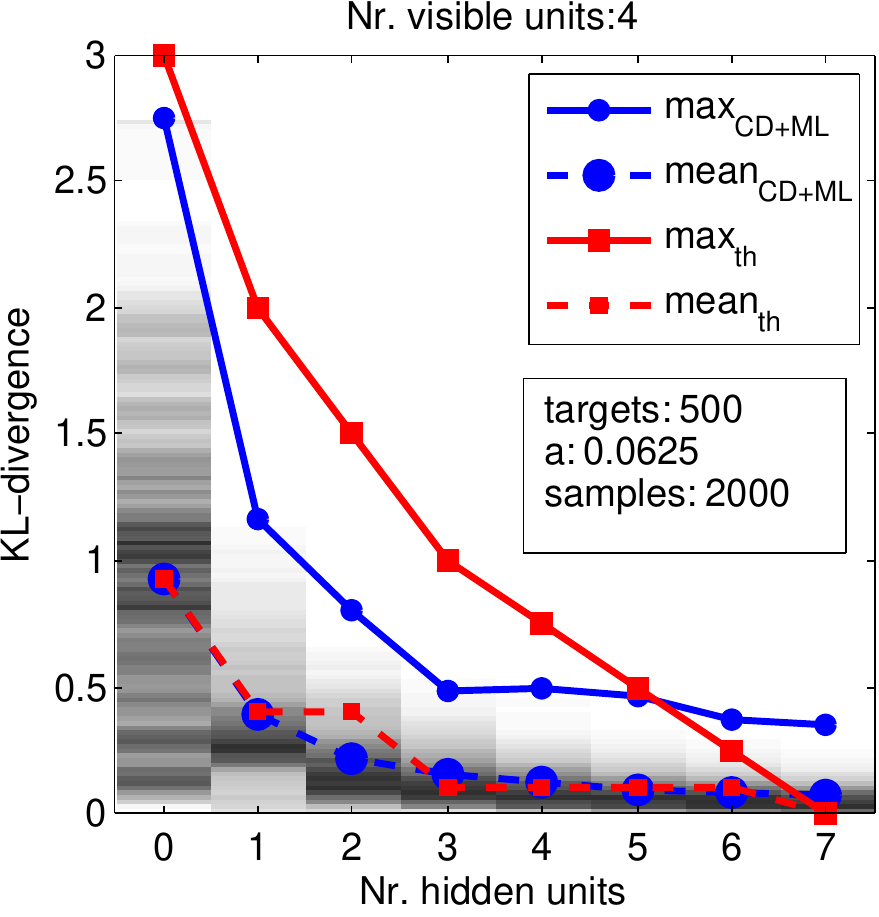}
\caption{Empirical evaluation of the representational power of small binary RBMs. 
The gray shading indicates the frequency at which the target distribution $P^i$ had a given divergence from the trained RBM distribution $p_{\theta_i}$ (the darker a value, the more frequent). 
The lines with round markers show the mean divergence (dashed) and  maximal divergence (solid) over all target distributions. 
The lines with square markers show the theoretical upper bounds of the mean divergence (dashed) and maximal divergence (solid) over the continuum of all possible target distributions drawn from the symmetric Dirichlet distribution $\Dir_{(a,\ldots,a)}$.  
}
\label{Figure4}
\end{figure}

We note that the number of distributions which have the largest divergence from $\Ncal$ is relatively small, and hence the random variable $\text{\textsf{max}}_{\text{\textsf{CD+ML}}}$ has a large variance (unless the number of data sets tends to infinity, $T\to \infty$). 
Moreover, we note that it is hard to find the best approximations of a given target $P^i$. 
Since the likelihood function $L_{X^i}(\theta)=\prod_{j=1}^N p_{\theta}(X^i_j)$ has many local maxima, the distribution $p_{\theta_i}$ is often not a global maximizer of $L_{X^i}$, 
even if training is arranged with many parameter initializations. 
Many times the estimated value $p_{\theta_i}$ is a good local minimizer of the divergence, but sometimes it is relatively poor (especially for the larger networks). 
This contributes again to the variance of $\text{\textsf{max}}_{\text{\textsf{CD+ML}}}$. 
The mean values $\text{\textsf{mean}}_{\text{\textsf{CD+ML}}}$, on the other hand, are more stable. 

\begin{figure}
\centering
\includegraphics[width=7cm]{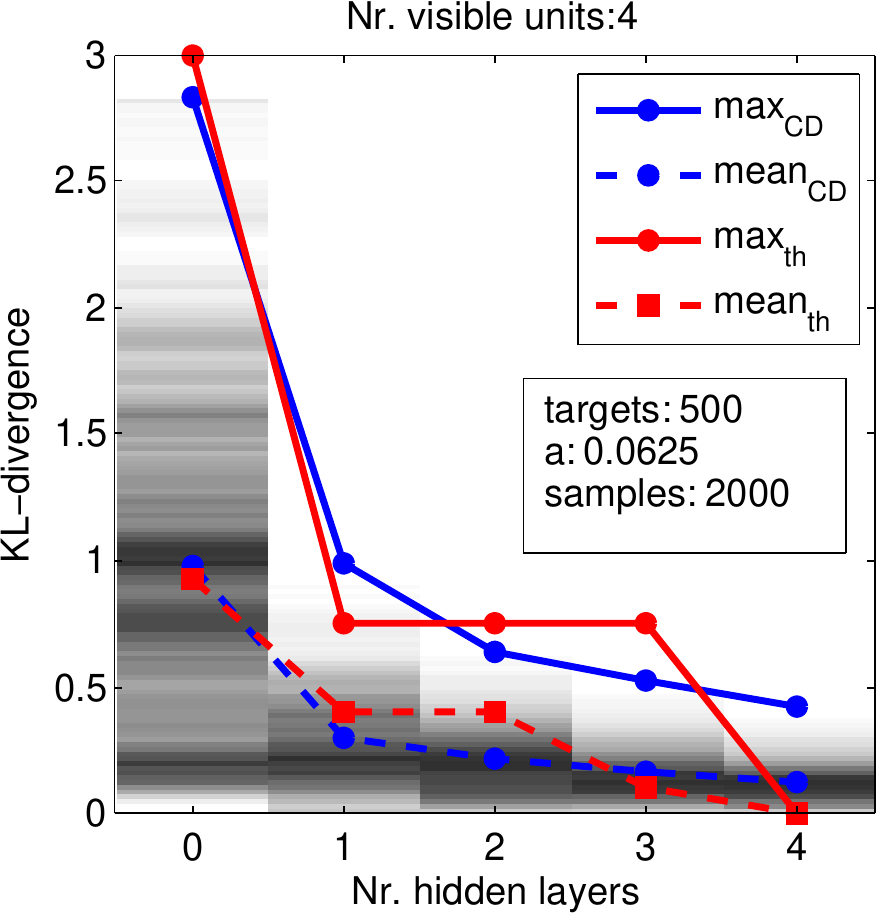}
\caption{Empirical evaluation of the representational power of small binary DBNs. 
The details are as in Figure~\ref{Figure4}, whereby the theoretical upper bounds shown here for the maximal and mean divergence, $\text{\textsf{max}}_{\text{\textsf{th}}}$ and $\text{\textsf{mean}}_{\text{\textsf{th}}}$, are a combination of our results for RBMs and DBNs. 
}
\label{Figure5}
\end{figure}

Figure~\ref{Figure4} shows the results for small binary RBMs with $3$ and $4$ visible units, 
and Figure~\ref{Figure5} shows the results for small constant-width binary DBNs with $4$ visible units. 
In both figures the maximum and mean divergence is captured relatively well by our theoretical bounds. 
The empirical maximum values have a well recognizable discrepancy from the theoretical bound. 
This is explained by the large variance of $\text{\textsf{max}}_{\text{\textsf{CD+ML}}}$, given the limited number of target distributions used in these experiments. 
Finding a maximizer of the divergence (a data vector $X\in\Xcal^N$ that is hardest to represent) is hard. 
Most target distributions can be approximated much better than the hardest distributions. 
A second observation is that with increasing network complexity (more hidden units), finding the best approximations of the target distributions becomes harder (even increasing the training efforts). 
This causes the empirical maximum divergence to actually surpass the theoretical bounds. 
In other words, although the models are in principle able to approximate the targets accurately, according to our theoretical bounds, in practice they may not, because of the difficult training, and their capacity remains wasted. 
The empirical mean values, on the other hand, have a much lower variance and are captured quite accurately by our theoretical bounds. 

\subsection*{Acknowledgment}
The author was supported in part by DARPA grant \mbox{FA8650-11-1-7145}. 
The author completed the revision of the original manuscript at the Max Planck Institute for Mathematics in the Sciences, Inselstra\ss e 22, 04103 Leipzig, Germany. 


\end{document}